\date{\today}

\documentclass{article}

\usepackage{microtype}
\usepackage{graphicx}
\usepackage{subcaption}
\usepackage{booktabs} 
\usepackage{mathrsfs}

\usepackage{hyperref}

\usepackage[noend]{algpseudocode}

\usepackage[utf8]{inputenc} 
\usepackage[T1]{fontenc}    
\usepackage{url}            
\usepackage{amsfonts}       
\usepackage{amsmath}       
\usepackage{nicefrac}       
\usepackage{microtype}      
\usepackage{amsthm}
\usepackage[makeroom]{cancel}
\newtheorem{theorem}{Theorem}
\newtheorem{lemma}[theorem]{Lemma}

\usepackage{tikz}


\usepackage[accepted]{icml2019}

\newcommand{\ourtitle}{SATNet: Bridging deep learning and logical reasoning using a differentiable satisfiability solver}

\icmltitlerunning{\ourtitle}

\title{\ourtitle}

\DeclareMathOperator*{\minimize}{minimize}
\DeclareMathOperator*{\maximize}{maximize}
\DeclareMathOperator{\tr}{tr}
\DeclareMathOperator{\diag}{diag}
\DeclareMathOperator{\vect}{vec}

\def\subjectto{\mbox{subject to}}

\newcommand{\norm}[1] {\|#1\|}
\newcommand{\mdot}[2]{\langle#1,#2\rangle}
\newcommand{\dd}{\mathsf{d}}

\newcommand{\inndex}{\iota}
\newcommand{\outdex}{o}
\newcommand{\truthvar}{\top}

\newcommand{\inset}{\mathcal{I}}
\newcommand{\outset}{\mathcal{O}}

\usepackage{color}

\newcommand{\alignedintertext}[1]{%
  \noalign{%
    \vskip\belowdisplayshortskip
    \vtop{\hsize=\linewidth#1\par
    \expandafter}%
    \expandafter\prevdepth\the\prevdepth
    \vskip\belowdisplayshortskip
  }%
}

\begin{document}
	
	\twocolumn[
	\icmltitle{\ourtitle}
	
	
	
	\icmlsetsymbol{equal}{*}
	
	\begin{icmlauthorlist}
		\icmlauthor{Po-Wei Wang}{cmucs}
		\icmlauthor{Priya L. Donti}{cmucs,cmuepp}
		\icmlauthor{Bryan Wilder}{usc}
		\icmlauthor{Zico Kolter}{cmucs,bosch}
	\end{icmlauthorlist}
	
	\icmlaffiliation{cmucs}{School of Computer Science, Carnegie Mellon University, Pittsburgh, Pennsylvania, USA}
	\icmlaffiliation{cmuepp}{Department of Engineering \& Public Policy, Carnegie Mellon University, Pittsburgh, Pennsylvania, USA}
	\icmlaffiliation{usc}{Department of Computer Science, University of Southern California, Los Angeles, California, USA}
	\icmlaffiliation{bosch}{Bosch Center for Artificial Intelligence, Pittsburgh, Pennsylvania, USA}
	
	\icmlcorrespondingauthor{Po-Wei Wang}{poweiw@cs.cmu.edu}
	\icmlcorrespondingauthor{Priya Donti}{pdonti@cmu.edu}
	\icmlcorrespondingauthor{Bryan Wilder}{bwilder@usc.edu}
	\icmlcorrespondingauthor{Zico Kolter}{zkolter@cs.cmu.edu}
	
	\icmlkeywords{Machine Learning, ICML, MAXSAT, Deep Learning, Convex Optimization}
	
	\vskip 0.3in
	]
	
	
	
	\printAffiliationsAndNotice{}  
	
	\begin{abstract}
		Integrating logical reasoning within deep learning architectures has been a major goal of modern AI systems.  In this paper, we propose a new direction toward this goal by introducing a differentiable (smoothed) maximum satisfiability (MAXSAT) solver that can be integrated into the loop of larger deep learning systems. Our (approximate) solver is based upon a fast coordinate descent approach to solving the semidefinite program (SDP) associated with the MAXSAT problem. We show how to analytically differentiate through the solution to this SDP and efficiently solve the associated backward pass.  We demonstrate that by integrating this solver into end-to-end learning systems, we can learn the logical structure of challenging problems in a minimally supervised fashion.  In particular, we show that we can learn the parity function using single-bit supervision (a traditionally hard task for deep networks) and learn how to play $9 \times 9$ Sudoku solely from examples. We also solve a ``visual Sudoku'' problem that maps images of Sudoku puzzles to their associated logical solutions by combining our MAXSAT solver with a traditional convolutional architecture. Our approach thus shows promise in integrating logical structures within deep learning.
	\end{abstract}
	
	\section{Introduction}
	
	Although modern deep learning has produced groundbreaking improvements in a variety of domains, state-of-the-art methods still struggle to capture ``hard'' and ``global'' constraints arising from discrete logical relationships.  Motivated by this deficiency, there has been a great deal of recent interest in integrating logical or symbolic reasoning into neural network architectures \cite{palm2017recurrent, yang2017differentiable, cingillioglu2018deeplogic, evans2018learning}.  However, with few exceptions, previous work primarily focuses on integrating \emph{preexisting} relationships into a larger differentiable system via tunable continuous parameters, not on \emph{discovering} the discrete relationships that produce a set of observations in a truly end-to-end fashion.
    As an illustrative example, consider the popular logic-based puzzle game Sudoku, in which a player must fill in a $9 \times 9$ partially-filled grid of numbers to satisfy specific constraints. 
    If the rules of Sudoku (i.e.~the relationships between problem variables) are not given, then it may be desirable to jointly learn the rules of the game \emph{and} learn how to solve Sudoku puzzles in an end-to-end manner.

	We consider the problem of learning logical structure specifically as expressed by satisfiability problems -- concretely, problems that are well-modeled as instances of SAT or MAXSAT (the optimization analogue of SAT).  
	This is a rich class of domains encompassing much of symbolic AI, which has traditionally been difficult to incorporate into neural network architectures since neural networks rely on continuous and differentiable parameterizations.  
	Our key contribution is to develop and derive a differentiable smoothed MAXSAT solver that can be embedded within more complex deep architectures, and show that this solver enables effective end-to-end learning of
	logical relationships from examples (without hard-coding of these relationships).
	More specifically, we build upon recent work in fast block coordinate descent methods for solving SDPs \cite{wang2017mixing} to build a differentiable solver for the smoothed SDP relaxation of MAXSAT.
	We provide an efficient mechanism to differentiate through the optimal solution of this SDP by using a similar block coordinate descent solver as used in the forward pass.  
	Our module is amenable to GPU acceleration, greatly improving training scalability.  
	
	Using this framework, we are able to solve several problems that, despite their simplicity, prove essentially impossible for traditional deep learning methods and existing logical learning methods to reliably learn without any prior knowledge.  
	In particular, we show that we can learn the parity function, known to be challenging for deep classifiers \cite{shalev2017failures}, with only single bit supervision.  
	We also show that we can learn to play $9\times 9$ Sudoku, a problem that is challenging for modern neural network architectures \cite{palm2017recurrent}. 
	We demonstrate that our module quickly recovers the constraints that describe a feasible Sudoku solution, learning to correctly solve 98.3\% of puzzles at test time \emph{without any hand-coded knowledge of the problem structure}.
	Finally, we show that we can embed this differentiable solver into larger architectures, solving a ``visual Sudoku'' problem where the input is an image of a Sudoku puzzle rather than a binary representation. 
	We show that, in a fully end-to-end setting, our method is able to integrate classical convolutional networks (for digit recognition) with the differentiable MAXSAT solver (to learn the logical portion).  
	Taken together, this presents a substantial advance toward a major goal of modern AI: integrating logical reasoning into deep learning architectures.

	\section{Related work}
	
	Recently, the deep learning community has given increasing attention to the concept of embedding complex, ``non-traditional'' layers within deep networks in order to train systems end-to-end.
	Major examples have included logical reasoning modules and optimization layers. 
	Our work combines research in these two areas by exploiting optimization-based relaxations of logical reasoning structures, namely an SDP relaxation of MAXSAT.
	We explore each of these relevant areas of research in more detail below.
	
	\paragraph{Logical reasoning in deep networks.} Our work is closely related to recent interest in integrating logical reasoning into deep learning architectures \cite{garcez2015neural}. 
	Most previous systems have focused on creating differentiable modules from an existing set of \emph{known relationships}, so that a deep network can learn the parameters of these relationships \cite{dai2018tunneling, manhaeve2018deepproblog, sourek2018lifted,xu2018semantic,hu2016harnessing,yang2017differentiable, selsam2018learning}.
	For example, \citet{palm2017recurrent} introduce a network that carries out relational reasoning using hand-coded information about which variables are allowed to interact, and test this network on $9 \times 9$ Sudoku.
	Similarly, \citet{evans2018learning} integrate inductive logic programming into neural networks by constructing differentiable SAT-based representations for specific ``rule templates.''
	While these networks are seeded with prior information about the relationships between variables, our approach learns these relationships \emph{and} their associated parameters end-to-end.
	While other recent work has also tried to jointly learn rules and parameters, the problem classes captured by these architectures have been limited.
	For instance, \citet{cingillioglu2018deeplogic} train a neural network to apply a specific class of logic programs, namely the binary classification problem of whether a given set of propositions entails a specific conclusion.  
	While this approach does not rely on prior hand-coded structure, our method applies to a broader class of domains, encompassing any problem reducible to MAXSAT. 
	
	\vspace{-5pt}
	\paragraph{Differentiable optimization layers.} Our work also fits within a line of research leveraging optimization as a layer in neural networks. For instance, previous work has introduced differentiable modules for quadratic programs \cite{amos2017optnet,donti2017task}, submodular optimization problems \cite{djolonga2017differentiable,tschiatschek2018differentiable,wilder2018melding}, and equilibrium computation in zero-sum games \cite{ling2018what}. To our knowledge, ours is the first work to use differentiable SDP relaxations to capture relationships between discrete variables.
	
	\vspace{-5pt}
	\paragraph{MAXSAT SDP relaxations.} We build on a long line of research exploring SDP relaxations as a tool for solving MAXSAT and related problems. 
	Classical work shows that such relaxations produce strong approximation guarantees for MAXCUT and MAX-2SAT \cite{goemans1995improved}, and are empirically tighter than standard linear programming relaxations \cite{gomes2006power}. 
	More recent work, e.g. \citet{wang2017mixing, wang2018low}, has developed low-rank SDP solvers for general MAXSAT problems.
	We extend the work of \citet{wang2017mixing} to create a differentiable optimization-based MAXSAT solver that can be employed in the loop of deep learning.
	
	\begin{figure*}[ht]
		\centering
		\includegraphics[width=\textwidth]{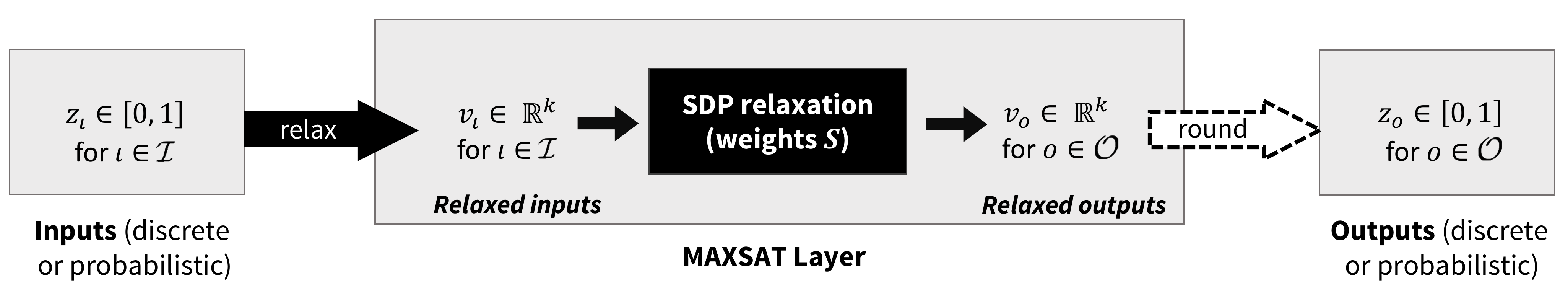}
		\caption{The forward pass of our MAXSAT layer. The layer takes as input the discrete or probabilistic assignments of known MAXSAT variables, and outputs guesses for the assignments of unknown variables via a MAXSAT SDP relaxation with weights $S$.}
		\label{fig:forward-pass}
	\end{figure*}
	
	\section{A differentiable satisfiability solver}
	The MAXSAT problem is the optimization analogue of the well-known satisfiability (SAT) problem, in which the goal is to \emph{maximize} the number of clauses satisfied.
	We present a differentiable, smoothed approximate MAXSAT solver that can be integrated into modern deep network architectures. 
	This solver uses a fast coordinate descent approach to solving an SDP relaxation of MAXSAT.
	We describe our MAXSAT SDP relaxation as well as the forward pass of our MAXSAT deep network layer (which employs this relaxation).
	We then show how to analytically differentiate through the MAXSAT SDP and efficiently solve the associated backward pass.
	
	\subsection{Solving an SDP formulation of satisfiability}
	\label{sec:sdp}
	
	Consider a MAXSAT instance with $n$ variables and $m$ clauses.
	Let $\tilde{v} \in \{-1, 1\}^{n}$ denote binary assignments of the problem variables, where $\tilde{v}_i$ is the truth value of variable $i \in \{1, \ldots, n\}$, and define $\tilde{s}_i \in \{-1, 0, 1\}^m$ for $i \in \{1, \ldots, n\}$, where $\tilde{s}_{ij}$ denotes the sign of $\tilde{v}_i$ in clause $j \in \{1, \ldots, m \}$.
	We then write the MAXSAT problem as
	\begin{equation}
		\label{eq:maxsat}
		\maximize_{\tilde{v} \in \{-1, 1\}^n} \sum_{j=1}^m \bigvee_{i=1}^n \mathbf{1} \{\tilde{s}_{ij} 
		\tilde{v}_i > 0 \}.
	\end{equation}
	
	As derived in \citet{goemans1995improved, wang2018low}, 
	to form a semidefinite relaxation of~\eqref{eq:maxsat}, we first relax the discrete variables $\tilde{v}_i$ into associated continuous variables $v_i \in \mathbb{R}^k, \; \| v_i \| = 1$ with respect to some ``truth direction'' $v_{\truthvar} \in \mathbb{R}^k, \; \|v_{\truthvar}\| = 1$.
	Specifically, we relate the continuous $v_i$ to the discrete $\tilde{v}_i$ probabilistically via $ P(\tilde{v}_i = 1) = \nicefrac{\cos^{-1}(-v_i^T v_{\truthvar})}{\pi}$ based on randomized rounding (\citet{goemans1995improved}; see Section~\ref{sec:rr}).
	We additionally define a coefficient vector $\tilde{s}_{\truthvar} = \{-1\}^m$ associated with $v_{\truthvar}$.
	Our SDP relaxation of MAXSAT is then
	\begin{equation}
		\label{eq:sdp}
		\begin{aligned}
			\minimize_{V \in \mathbb{R}^{k \times (n+1)}} \;\;&\mdot{S^T S}{V^T V},\\
			\subjectto\;\; &\norm{v_i}=1,\;\;i= \truthvar, 1,\ldots,n
		\end{aligned}
	\end{equation}
	where $V \equiv \begin{bmatrix} v_{\truthvar} & v_1 & \ldots & v_{n} \end{bmatrix} \in \mathbb{R}^{k \times (n+1)}$, and $S \equiv \begin{bmatrix} \tilde{s}_{\truthvar} & \tilde{s}_1 & \ldots & \tilde{s}_n \end{bmatrix}\diag(\nicefrac{1}{\sqrt{4|\tilde{s}_{j}|}}) \in \mathbb{R}^{m \times (n+1)}$.
	We note that this problem is a low-rank (but non-convex) formulation of MIN-UNSAT, which is equivalent to MAXSAT.
	This formulation can be rewritten as an SDP, and has been shown to recover the optimal SDP solution given $k > \sqrt{2n}$ \cite{barvinok1995problems, pataki1998rank}.
	
	Despite its non-convexity, problem~\eqref{eq:sdp} can then be solved optimally via coordinate descent for all $i=\truthvar, 1, \ldots, n$.
	In particular, the objective terms that depend on $v_i$ are given by $v_i^T \sum_{j=0}^n s_i^T s_j v_j$, where $s_i$ is the $i$th column vector of $S$.
	Minimizing this quantity over $v_i$ subject to the constraint that $\|v_i\| = 1$ yields the coordinate descent update
	\begin{equation}
		\label{eq:coord-desc-forward}
		v_i = -g_i/\norm{g_i},\;\;\text{where}\;g_i = VS^T s_i - \norm{s_i}^2 v_i.
	\end{equation}
	These updates provably converge to the globally optimal fixed point of the SDP~\eqref{eq:sdp} \cite{wang2017mixing}.
	A more detailed derivation of this update can be found in Appendix~\ref{appsec:coord-desc-fwd}.
	
	\begin{algorithm}[t!]
		\caption{SATNet Layer}
		\begin{algorithmic}[1]
			\Procedure{init()}{}
			\State \emph{// rank, num aux vars, initial weights, rand vectors}
			\State \textbf{init} $m, n_{\text{aux}}, S$ 
			\State \textbf{init} random unit vectors $v_{\truthvar}$, $v_{i}^{\text{rand}} \; \forall i \in \{1, \ldots, n\}$
			
			\State \emph{// smallest $k$ for which~\eqref{eq:sdp} recovers SDP solution}
			\State \textbf{set} $k = \sqrt{2n}+1$ \hspace{1em}
			
			\EndProcedure
			\State
			
			\Procedure{forward}{$Z_{\inset}$} \hspace{3.5em}
			\State \textbf{compute} $V_{\inset}$ \textbf{from} $Z_{\inset}$ via~\eqref{eq:input-relax}
			\State \textbf{compute} $V_{\outset}$ \textbf{from} $V_{\inset}$ via coord.\ descent (Alg~\ref{alg:forward-pass-cd})
			\State \textbf{compute} $Z_{\outset}$ \textbf{from} $V_{\outset}$ via~\eqref{eq:rr-prob}
			\State \textbf{return} $Z_{\outset}$
			\EndProcedure
			\State
			
			\Procedure{backward}{$\nicefrac{\partial \ell}{\partial Z_{\outset}}$}
			\State \textbf{compute} $\nicefrac{\partial \ell}{\partial V_{\outset}}$ via~\eqref{eq:dl-dvout}
			\State \textbf{compute} $U$ \textbf{from} $\nicefrac{\partial \ell}{\partial V_{\outset}}$  via coord.\ descent (Alg~\ref{alg:backward-pass-cd})
			\State \textbf{compute} $\nicefrac{\partial \ell}{\partial Z_{\inset}}$, $\nicefrac{\partial \ell}{\partial S}$ \textbf{from} $U$ via~\eqref{eq:dl-dzin},~\eqref{eq:dl-ds-main}
			\State \textbf{return} $\nicefrac{\partial \ell}{\partial Z_{\inset}}$
			\EndProcedure
		\end{algorithmic}
		\label{alg:satnet}
	\end{algorithm}
	
	\subsection{SATNet: Satisfiability solving as a layer}
	
	Using our MAXSAT SDP relaxation and associated coordinate descent updates, we create a deep network layer for satisfiability solving (SATNet).
	Define $\inset \subset \{1, \ldots, n\}$ to be the indices of MAXSAT variables with known assignments, and let $\outset \equiv \{1, \ldots, n\} \setminus \inset$ correspond to the indices of variables with unknown assignments.
	Our layer admits probabilistic or binary inputs $z_{\inndex} \in [0,1], \inndex \in \inset$, and then outputs the assignments of unknown variables $z_{\outdex} \in [0,1], \outdex \in \outset$ which are similarly probabilistic or (optionally, at test time) binary.
	We let $Z_{\inset} \in [0,1]^{|\inset|}$ and $Z_{\outset} \in [0,1]^{|\outset|}$ refer to all input and output assignments, respectively.
	
	The outputs 
	$Z_{\outset}$  
	are generated from inputs 
	$Z_{\inset}$  
	via the SDP~\eqref{eq:sdp}, and the weights of our layer correspond to the SDP's low-rank coefficient matrix $S$.
	This forward pass procedure is pictured in Figure~\ref{fig:forward-pass}.
	We describe the steps of layer initialization and the forward pass in Algorithm~\ref{alg:satnet}, and in more detail below.
	
	\subsubsection{Layer initialization}
	\label{sec:aux}
	
	When initializing SATNet, the user must specify a maximum number of clauses $m$ that this layer can represent.
	It is often desirable to set $m$ to be low; in particular, \emph{low-rank structure} can prevent overfitting and thus improve generalization. 
	
	Given this low-rank structure, a user may wish to somewhat increase the layer's representational ability via auxiliary variables.
	The high-level intuition here follows from 
	the conjunctive normal form (CNF) representation of boolean satisfaction problems; adding additional variables to a problem can dramatically reduce the number of CNF clauses needed to describe that problem, as these variables play a role akin to register memory that is useful for inference.
	
	Finally, we set $k = \sqrt{2n} + 1$, where here $n$ captures the number of actual problem variables in addition to auxiliary variables. 
	This is the minimum value of $k$ required for our MAXSAT relaxation~\eqref{eq:sdp} to recover the optimal solution of its associated SDP \cite{barvinok1995problems, pataki1998rank}.
	
	\subsubsection{Step 1: Relaxing layer inputs}
	\label{sec:input-relax}
	Our layer first relaxes its inputs $Z_{\inset}$ into continuous vectors for use in the SDP formulation~\eqref{eq:sdp}.
	That is, we relax each layer input $z_\inndex, \inndex \in \inset$ to an associated 
	random unit vector $v_\inndex \in \mathbb{R}^k$ so that
        \begin{equation}
                v_{\inndex}^T v_{\truthvar} = -\cos(\pi z_{\inndex}).
                \label{eq:fwd-constr}
        \end{equation}
        (This equation is derived from the probabilistic relationship described in Section~\ref{sec:sdp} between discrete variables and their continuous relaxations.)
        Constraint~\eqref{eq:fwd-constr} can be satisfied by 
	\begin{equation}
		\label{eq:input-relax}
                v_{\inndex} = -\cos(\pi z_{\inndex})v_{\truthvar} + \sin(\pi z_{\inndex})(I_k-v_{\truthvar}v_{\truthvar}^T)v^{\text{rand}}_{\inndex},
	\end{equation}
        where $v^{\text{rand}}_{\inndex}$ is a random unit vector.
	For simplicity, we use the notation $V_{\inset} \in \mathbb{R}^{k \times |\inset|}$ (i.e.~the $\inset$-indexed column subset of $V$) to collectively refer to all relaxed layer inputs derived via Equation~\eqref{eq:input-relax}.
	
	\subsubsection{Step 2: Generating continuous relaxations of outputs via SDP}
	\label{sec:sdp-fwd}
	
	Given the continuous input relaxations 
	$V_{\inset}$,
	our layer employs the coordinate descent updates~\eqref{eq:coord-desc-forward} to compute values for continuous output relaxations $v_{\outdex}, \, \outdex \in \outset$ (which we collectively refer to as $V_{\outset} \in \mathbb{R}^{k \times |\outset|}$).
	Notably, coordinate descent updates are \emph{only computed for output variables}, i.e. are not computed for variables whose assignments are given as input to the layer.

	
	Our coordinate descent algorithm for the forward pass is detailed in Algorithm~\ref{alg:forward-pass-cd}.
	This algorithm maintains the term $\Omega = VS^T$ needed to compute $g_{\outdex}$, and then modifies it via a rank-one update during each inner iteration.
	Accordingly, the per-iteration runtime is $O(nmk)$ (and in practice, only a small number of iterations is required for convergence).
	
	\begin{algorithm}[t]
		\begin{algorithmic}[1]
			\caption{Forward pass coordinate descent}
			\State \textbf{input} $V_{\inset}$ \hspace{3.5em} \emph{// inputs for known variables}
			\State \textbf{init} $v_{\outdex}$ with random vector $v^{\text{rand}}_{\outdex},\;\forall \outdex \in \outset$.
			\State \textbf{compute} $\Omega = VS^T$
			\While{not converged}
			\For{$\outdex \in \outset$} \hspace{1em} \emph{// for all output variables}
			\State \textbf{compute} $g_{\outdex} = \Omega s_{\outdex} - \|s_{\outdex}\|^2 v_{\outdex}$ as in~\eqref{eq:coord-desc-forward}
			\State \textbf{compute} $v_{\outdex} = -g_{\outdex}/\|g_{\outdex}\|$ as in~\eqref{eq:coord-desc-forward}
			\State \textbf{update} $\Omega = \Omega + (v_{\outdex} - v_{\outdex}^{\text{prev}})s_{\outdex}^T$
			\EndFor
			\EndWhile
			\State \textbf{output} $V_{\outset}$ \hspace{3.5em} \emph{// final guess for output cols of $V$}
			\label{alg:forward-pass-cd}
		\end{algorithmic}
	\end{algorithm}
	
	\subsubsection{Step 3: Generating discrete or probabilistic outputs}
	\label{sec:rr}
	
	Given the relaxed outputs 
	$V_{\outset}$
	from coordinate descent, our layer converts these outputs to discrete or probabilistic variable assignments 
	$Z_{\outset}$
	via either thresholding or randomized rounding (which we describe here). 
	
	The main idea of randomized rounding is that for every $v_{\outdex}, \outdex \in \outset$, we can take a random hyperplane $r$ from the unit sphere and assign
	\begin{equation}
		\tilde{v}_{\outdex} = \begin{cases}
			1& \text{if }\text{sign}(v_{\outdex}^Tr) = \text{sign}(v_{\truthvar}^T r)\\
			-1&\text{otherwise}
		\end{cases}, \; \outdex \in \outset,
	\end{equation}
	where $\tilde{v}_{\outdex}$ is the boolean output for $v_{\outdex}$. 
	Intuitively, this scheme sets $\tilde{v}_{\outdex}$ to ``true'' if and only if $v_{\outdex}$ and the truth vector $v_{\truthvar}$ are on the same side of the random hyperplane $r$.
	Given the correct weights $S$, this randomized rounding procedure assures an optimal expected approximation ratio for certain NP-hard problems \cite{goemans1995improved}. 
	
	During training, we do not explicitly perform randomized rounding.
	We instead note that the probability that $v_{\outdex}$ and $v_{\truthvar}$ are on the same side of any given $r$ is
	\begin{equation}
		\label{eq:rr-prob}
		P(\tilde{v}_{\outdex}) = \cos^{-1}(-v_{\outdex}^T v_{\truthvar})/\pi,
	\end{equation}
	and thus set $z_{\outdex} = P(\tilde{v}_{\outdex})$ to equal this probability. 
	
	During testing, we can either output probabilistic outputs in the same fashion, or output discrete assignments via thresholding or randomized rounding.
	If using randomized rounding, we round multiple times, and then
	set $z_{\outdex}$ to be the 
	boolean solution maximizing the MAXSAT objective in Equation~\eqref{eq:maxsat}. 
	Prior work has observed that such repeated rounding improves approximation ratios in practice, especially for MAXSAT problems \cite{wang2018low}.

	\subsection{Computing the backward pass}
	\label{sec:backward}
	
	We now derive backpropagation updates through our SATNet layer to enable its integration into a neural network.
	That is, given the gradients $\nicefrac{\partial \ell}{\partial Z_{\outset}}$ of the network loss $\ell$ with respect to the layer outputs, we must compute the gradients $\nicefrac{\partial \ell}{\partial  Z_{\inset}}$ with respect to layer inputs and $\nicefrac{\partial \ell}{\partial S}$ with respect to layer weights.
	As it would be inefficient in terms of time and memory to explicitly unroll the forward-pass computations and store intermediate Jacobians,
	we instead derive analytical expressions to \emph{compute the desired gradients directly}, employing an efficient coordinate descent algorithm.
	The procedure for computing these gradients is summarized in Algorithm~\ref{alg:satnet} and derived below.
	
	\subsubsection{From probabilistic outputs to their continuous relaxations}
	
	Given $\nicefrac{\partial \ell}{\partial Z_{\outset}}$ (with respect to the layer outputs), we first derive an expression for $\nicefrac{\partial \ell}{\partial V_{\outset}}$ (with respect to the output relaxations) by pushing gradients through the probability assignment mechanism described in Section~\ref{sec:rr}.
	That is, for each $\outdex \in \outset$,
	\begin{equation}
		\label{eq:dl-dvout}
		\frac{\partial \ell}{\partial v_{\outdex}} 
		= \left(  \frac{\partial \ell}{\partial z_{\outdex}} \right) \left( \frac{\partial z_{\outdex}}{\partial v_{\outdex}} \right) 
		=  \left(  \frac{\partial \ell}{\partial z_{\outdex}} \right) \frac{1}{\pi \sin(\pi z_{\outdex})} v_{\truthvar} ,
	\end{equation}
	where we obtain $\nicefrac{\partial z_{\outdex}}{\partial v_{\outdex}}$ by differentiating through Equation~\eqref{eq:rr-prob} (or, more readily, by implicitly differentiating through its rearrangement 
	$\cos(\pi z_{\outdex}) = -v_{\truthvar}^T v_{\outdex}$).

	\subsubsection{Backpropagation through the SDP}
	\label{sec:sdp-back}
	
	Given the analytical form for $\nicefrac{\partial \ell}{\partial V_{\outset}}$ (with respect to the output relaxations), we next seek to derive $\nicefrac{\partial \ell}{\partial V_{\inset}}$ (with respect to the input relaxations) and $\nicefrac{\partial \ell}{\partial S}$ (with respect to the layer weights) by pushing gradients through our SDP solution procedure (Section~\ref{sec:sdp-fwd}).
	We describe the analytical form for the resultant gradients in Theorem~\ref{thm:back-grads}.
	
	\begin{theorem}
		Define $P_{\outdex} \equiv I_k-v_{\outdex} v_{\outdex}^T$ for each $\outdex \in \outset$. Then, define $U \in \mathbb{R}^{k \times n}$, where the columns $U_{\inset} = 0$ and the columns $U_{\outset}$ are given by
		\begin{equation}
			\vect(U_{\outset})=  \left(P((C+D)\otimes I_k)P\right)^\dagger\vect\left(\frac{\partial \ell}{\partial V_{\outset}}\right),
			\label{eq:mv-solve}
		\end{equation}
	    where $P \equiv \diag(P_\outdex)$, where $C\equiv S_{\outset}^TS_{\outset}-\diag(\norm{s_{\outdex}}^2)$, and where $D \equiv \diag(\|g_{\outdex}\|)$. 
		%
		Then, the gradient of the network loss $\ell$ with respect to the relaxed layer inputs is
		\begin{equation}
			\frac{\partial \ell}{\partial V_{\inset}} = -\Big(\sum_{\outdex \in \outset} u_{\outdex} s_{\outdex}^T \Big)S_{\inset}, \label{eq:dl-dvin-main}
		\end{equation}
	where $S_{\inset}$ is the $\inset$-indexed column subset of $S$, and the gradient with respect to the layer weights is
		\begin{equation}
			\frac{\partial \ell}{\partial S} = -\Big(\sum_{\outdex \in \outset} u_{\outdex} s_{\outdex}^T \Big)^TV - (SV^T)U. \label{eq:dl-ds-main}
		\end{equation}
		\label{thm:back-grads}
	\end{theorem}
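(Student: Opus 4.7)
My strategy is implicit differentiation of the coordinate-descent fixed-point equations combined with a standard adjoint trick, so that the entire backward pass reduces to one linear solve in the same symmetric operator, namely~\eqref{eq:mv-solve}. The vector $U_{\outset}$ will play the role of an adjoint variable carrying $\partial\ell/\partial V_{\outset}$ back through the implicit map $V_{\outset}(V_{\inset}, S)$ defined by the stationarity conditions $g_{\outdex}+\|g_{\outdex}\|\,v_{\outdex}=0$ (with $\|v_{\outdex}\|=1$) for each $\outdex\in\outset$; then~\eqref{eq:dl-dvin-main} and~\eqref{eq:dl-ds-main} will follow by the chain rule applied to the explicit dependence of the linearized right-hand side on $V_{\inset}$ and $S$.

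Starting from $g_{\outdex}=VS^Ts_{\outdex}-\|s_{\outdex}\|^2 v_{\outdex}$, I differentiate the stationarity condition and multiply on the left by the tangent projector $P_{\outdex}=I_k-v_{\outdex}v_{\outdex}^T$. This annihilates every piece parallel to $v_{\outdex}$ --- the unknown $d\|g_{\outdex}\|\,v_{\outdex}$ and the nuisance $d(\|s_{\outdex}\|^2)\,v_{\outdex}$ --- and, once the $\outdex'=\outdex$ self-term inside $(dV)S^Ts_{\outdex}$ cancels $-\|s_{\outdex}\|^2 dv_{\outdex}$, what remains is
\begin{equation*}
\sum_{\outdex'\in\outset,\,\outdex'\neq\outdex}\!(s_{\outdex}^T s_{\outdex'})\,P_{\outdex}\,dv_{\outdex'} + \|g_{\outdex}\|\,dv_{\outdex} = -P_{\outdex}\!\sum_{j\in\inset}(s_{\outdex}^T s_j)\,dv_j - P_{\outdex}\bigl[V(dS)^T s_{\outdex} + VS^T\,ds_{\outdex}\bigr].
\end{equation*}
Stacking over $\outdex\in\outset$ and inserting a second (harmless) $P_{\outdex'}$ on the right, since each $dv_{\outdex'}$ already lies in the tangent space at $v_{\outdex'}$, the coefficient operator becomes exactly $P\bigl((C+D)\otimes I_k\bigr)P$: the diagonal blocks are $\|g_{\outdex}\|P_{\outdex}$ (the $D$ part) and the off-diagonal blocks are $(s_{\outdex}^T s_{\outdex'})P_{\outdex}P_{\outdex'}$ (the $C$ part, whose diagonal is zero by construction).

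Because this operator is symmetric with range equal to the joint tangent space, the pseudoinverse in~\eqref{eq:mv-solve} produces $U_{\outset}$ already in that tangent space (each $u_{\outdex}\perp v_{\outdex}$), and by symmetry $\vect(U_{\outset})^T M = \vect(\partial\ell/\partial V_{\outset})^T$. Pairing this identity with $\vect(dV_{\outset})$ gives $d\ell = \sum_{\outdex} u_{\outdex}^T R_{\outdex}$, where $R_{\outdex}$ is the right-hand side above; the leading $P_{\outdex}$ drops out because $u_{\outdex}^T P_{\outdex} = u_{\outdex}^T$. Reading off the coefficients of $dv_j$ for $j\in\inset$ immediately produces~\eqref{eq:dl-dvin-main}; expanding $V(dS)^T s_{\outdex}=\sum_j v_j(s_{\outdex}^T ds_j)$ and $VS^T\,ds_{\outdex}=\sum_j v_j(s_j^T ds_{\outdex})$ and collecting coefficients of $dS$ produces the two distinct contributions $-(SU^T)V$ and $-(SV^T)U$, which combine into~\eqref{eq:dl-ds-main} after rewriting $SU^T=(\sum_{\outdex}u_{\outdex}s_{\outdex}^T)^T$.

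The main obstacle is the projection bookkeeping: one has to verify carefully that $P_{\outdex}$ on the left annihilates \emph{every} $v_{\outdex}$-parallel contribution, so that the scalar $d\|g_{\outdex}\|$ never needs to be computed, and that the combinatorial cancellation of the $\outdex'=\outdex$ self-term with $-\|s_{\outdex}\|^2 dv_{\outdex}$ is precisely what lets $C=S_{\outset}^T S_{\outset}-\diag(\|s_{\outdex}\|^2)$, rather than the full Gram matrix, appear in the operator. A secondary subtlety --- responsible for the two structurally different pieces in~\eqref{eq:dl-ds-main} --- is that $s_{\outdex}^T s_j$ is symmetric in its two arguments, so differentiation with respect to $S$ picks up contributions through both slots, one ultimately contracted as $U^T V$ and the other as $V^T U$.
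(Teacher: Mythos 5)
Your proposal is correct and follows essentially the same route as the paper: implicit (total) differentiation of the fixed-point conditions, assembly of the projected linear operator $P((C+D)\otimes I_k)P$ after the self-term cancellation, introduction of the adjoint $U_{\outset}$ via the symmetric pseudoinverse, and reading off the coefficients of $\dd v_{\inndex}$ and $\dd S$ to obtain~\eqref{eq:dl-dvin-main} and~\eqref{eq:dl-ds-main}. The only cosmetic difference is that you apply $P_{\outdex}$ up front to annihilate the $v_{\outdex}$-parallel terms, whereas the paper carries these inside $\xi_{\outdex}$ and disposes of them via the pseudoinverse lemma; the substance is identical.
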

	
	We defer the derivation of Theorem~\ref{thm:back-grads} to Appendix~\ref{appsec:sdp-backprop}.
	Although this derivation is somewhat involved, the concept at a high level is quite simple: we differentiate the solution of the SDP problem (Section~\ref{sec:sdp}) with respect to the problem's parameters and input, which requires computing the (relatively large) matrix-vector solve given in Equation~\eqref{eq:mv-solve}. 
	
	To solve Equation~\eqref{eq:mv-solve}, 
	we use a coordinate descent approach that closely mirrors the coordinate descent procedure employed in the forward pass, and which has similar fast convergence properties.
	This procedure, described in Algorithm~\ref{alg:backward-pass-cd}, 
	enables us to compute the desired gradients without needing to maintain intermediate Jacobians explicitly.
	Mirroring the forward pass, we use rank-one updates to maintain and modify the term $\Psi = U S^T$ needed to compute $\dd g_{\outdex}$, which again enables our algorithm to run in $O(nmk)$ time.
	We defer the derivation of Algorithm~\ref{alg:backward-pass-cd} to Appendix~\ref{appsec:back-cd}.
	
	\subsubsection{From relaxed to original inputs}
	
	As a final step, we must use the gradient $\nicefrac{\partial \ell}{\partial V_{\inset}}$ (with respect to the input relaxations) to derive the gradient $\nicefrac{\partial \ell}{\partial Z_{\inset}}$ (with respect to the actual inputs) by pushing gradients through the input relaxation procedure described in Section~\ref{sec:input-relax}.
	For each $\inndex \in \inset$, we see that
	\begin{equation}
		\begin{aligned}
			\frac{\partial \ell}{\partial z_{\inndex}} &= \frac{\partial \ell}{\partial z_{\inndex}^{\star}} + \left(\frac{\partial \ell}{\partial v_{\inndex}}  \right)^T \frac{\partial v_{\inndex}}{\partial z_{\inndex}} \\
			&= \frac{\partial \ell}{\partial z_{\inndex}^{\star}} -  \left(\frac{\partial v_{\inndex}}{\partial z_{\inndex}} \right)^T \left(\sum_{\outdex \in \outset} u_{\outdex} s_{\outdex}^T \right) s_{\inndex}
		\end{aligned}
		\label{eq:dl-dzin}
	\end{equation}
	where
	\begin{equation}
			\frac{\partial v_{\inndex}}{\partial z_{\inndex}} = 
			\pi\left(\sin (\pi z_{\inndex})v_{\truthvar} + \cos (\pi z_{\inndex}) (I_k - v_{\truthvar} v_{\truthvar}^T) v_{\inndex}^{\text{rand}} \right),
	\label{eq:dv-dz}
	\end{equation}
	and where $\nicefrac{\partial \ell}{\partial z_{\inndex}^{\star}}$ captures any direct dependence of $\ell$ on $z_{\inndex}^{\star}$ (as opposed to dependence through $v_{\inndex}$).
	Here, the expression for $\nicefrac{\partial \ell}{\partial v_{\inndex}}$ comes from Equation~\eqref{eq:dl-dvin-main}, and we obtain $\nicefrac{\partial v_{\inndex}}{\partial z_{\inndex}}$ by differentiating Equation~\eqref{eq:input-relax}.
	
	\begin{algorithm}[t]
		\begin{algorithmic}[1]
			\caption{Backward pass coordinate descent}
			\State \textbf{input} $\{\nicefrac{\partial \ell}{\partial v_{\outdex}}\;|\; \outdex \in \outset \}$ \hspace{0.3em} \emph{// grads w.r.t. relaxed outputs}
			\vspace{0.25em}
			\State \emph{// Compute 
				$U_{\outset}$ from Equation~\eqref{eq:mv-solve}}
			\State \textbf{init} $U_{\outset} = 0$ and $\Psi = (U_{\outset})S^T_{\outset} = 0$
			\While{not converged}
			\For{$\outdex \in \outset$} \hspace{1em} \emph{// for all output variables}
			\State \textbf{compute} $\dd g_{\outdex} = \Psi s_{\outdex} - \|s_{\outdex}\|^2 u_{\outdex} - \partial\ell/\partial v_{\outdex}$.
			\State \textbf{compute} $u_{\outdex} = -P_{\outdex}\dd g_{\outdex}/\|g_{\outdex}\|$.
			\State \textbf{update} $\Psi = \Psi + (u_{\outdex} - u_{\outdex}^{\text{prev}})s_{\outdex}^T $
			\EndFor
			\EndWhile
			\State \textbf{output} $U_{\outset}$
			\label{alg:backward-pass-cd}
		\end{algorithmic}
	\end{algorithm}
	
	
	
	\subsection{An efficient GPU implementation}
	The coordinate descent updates in Algorithms~\ref{alg:forward-pass-cd} and~\ref{alg:backward-pass-cd} dominate the computational costs of the forward and backward passes, respectively.
	We thus present an efficient, parallel GPU implementation of these algorithms to speed up training and inference. 
	During the inner loop of coordinate descent, our implementation parallelizes the computation of all $g_{\outdex}$ ($\dd g_{\outdex}$) terms by parallelizing the computation of $\Omega$ ($\Psi$), as well as of all rank-one updates of $\Omega$ ($\Psi$).
	This 
	underscores the benefit of using a low-rank SDP formulation in our MAXSAT layer, as traditional full-rank coordinate descent cannot be efficiently parallelized.
	We find in our preliminary benchmarks that our GPU CUDA-C implementation is up to $18-30$x faster than the corresponding OpenMP implementation run on Xeon CPUs.
	Source code for our implementation is available at \url{https://github.com/locuslab/SATNet}.
	
	\section{Experiments}
	
	We test our MAXSAT layer approach in three domains that are traditionally difficult for neural networks: learning the parity function with single-bit supervision, learning $9 \times 9$ Sudoku solely from examples, and solving a ``visual Sudoku'' problem that generates the logical Sudoku solution given an input image of a Sudoku puzzle.
	We find that in all cases, we are able to perform substantially better on these tasks than previous deep learning-based approaches.

	\subsection{Learning parity (chained XOR)}
	
	This experiment tests SATNet's ability to differentiate through many successive SAT problems by learning to compute the parity function. 
	The parity of a bit string is defined as one if there is an odd number of ones in the sequence and zero otherwise. 
	The task is to map input sequences to their parity, given a dataset of example sequence/parity pairs. Learning parity functions from such single-bit supervision is known to pose difficulties for conventional deep learning approaches \cite{shalev2017failures}. 
	However, parity is simply a logic function -- namely, a sequence of XOR operations applied successively to the input sequence.
	
	Hence, for a sequence of length $L$, we construct our model to contain a sequence of $L-1$ SATNet layers with tied weights (similar to a recurrent network). 
	The first layer receives the first two binary values as input, and layer $d$ receives value $d$ along with the rounded output of layer $d-1$. If each layer learns to compute the XOR function, the combined system will correctly compute parity. 
	However, this requires the model to coordinate a long series of SAT problems without any intermediate supervision. 
	
	Figure~\ref{fig:parity}  shows that our model accomplishes this task for input sequences of length $L = 20$ and $L=40$.
	For each sequence length, we generate a dataset of 10K random examples (9K training and 1K testing). 
	We train our model using cross-entropy loss and the Adam optimizer \cite{kingma2015adam} with a learning rate of $10^{-1}$. 
	We compare to an LSTM sequence classifier, which uses 100 hidden units and a learning rate of $10^{-3}$ (we tried varying the architecture and learning rate but did not observe any improvement). 
	In each case, our model quickly learns the target function, with error on the held-out set converging to zero within 20 epochs. 
	In contrast, the LSTM is unable to learn an appropriate representation, with only minor improvement over the course of 100 training epochs; across both input lengths, it achieves a testing error rate of at best 0.476 (where a random guess achieves value 0.5).   
	
	\begin{figure}
		\centering
		\includegraphics[width=0.45\textwidth]{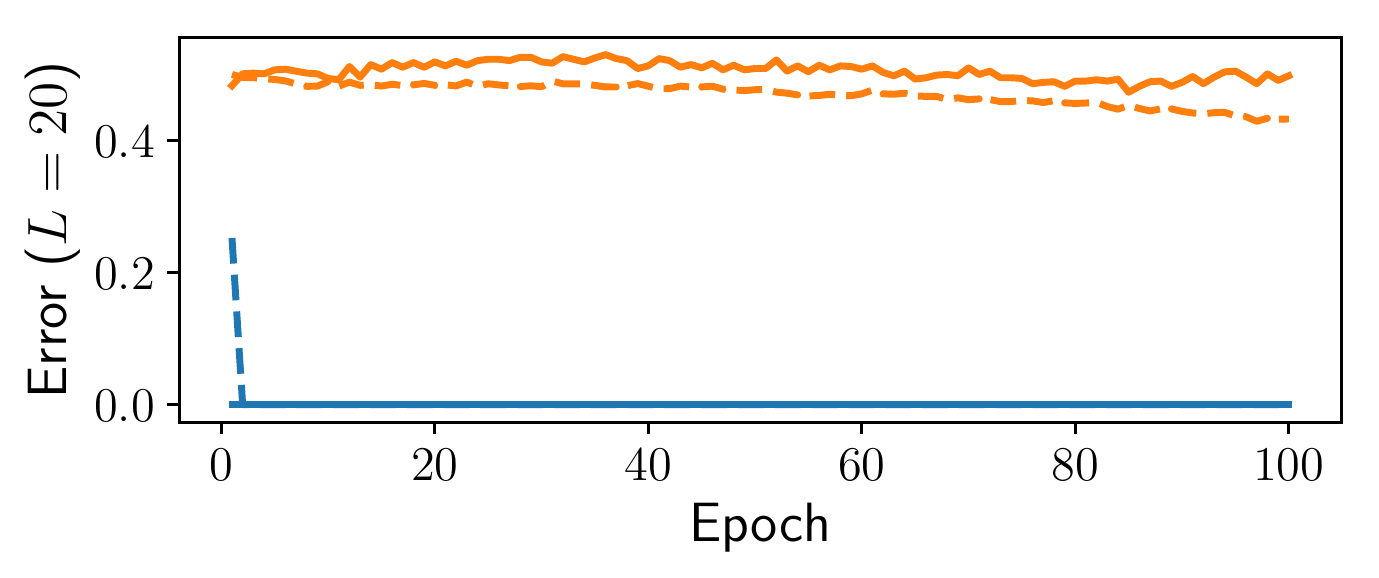} \\
		\includegraphics[width=0.45\textwidth]{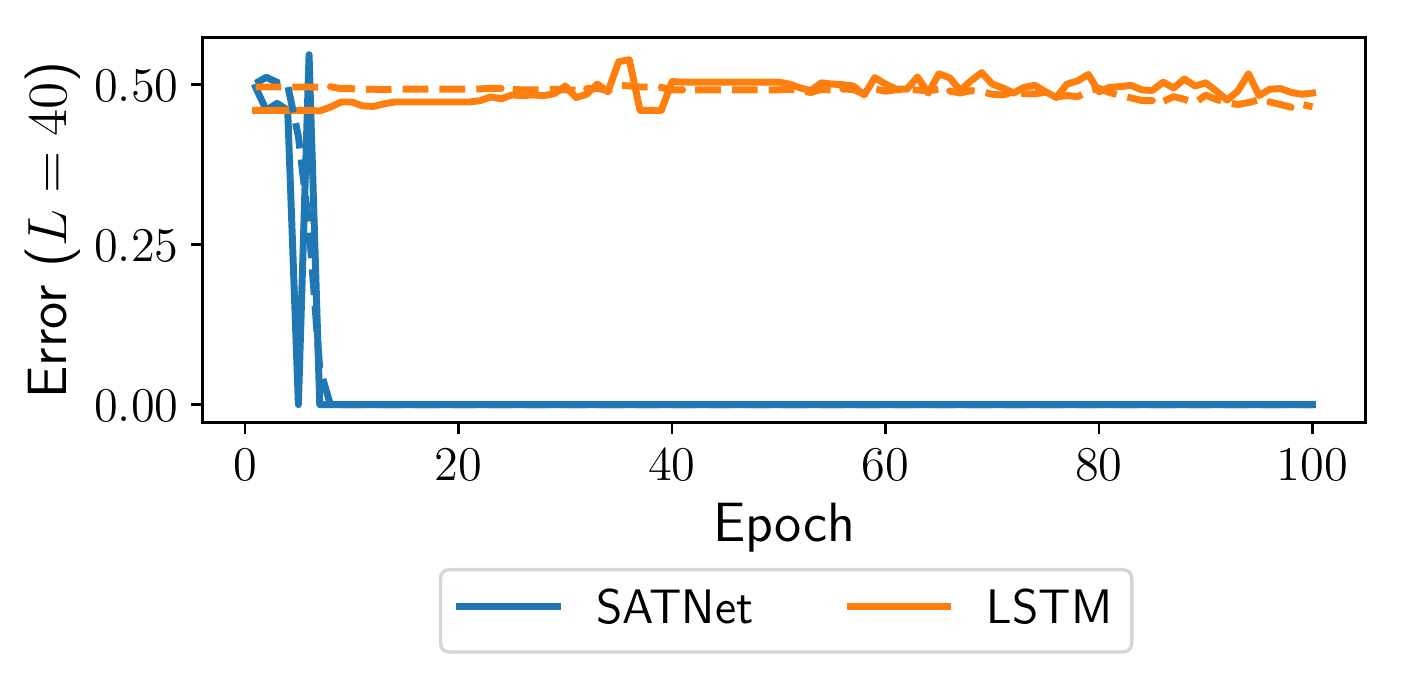}
		\caption{Error rate for the parity task with $L = 20$ (top) and $L = 40$ (bottom). Solid lines denote test values, while dashed lines represent training values.} 
		\label{fig:parity}
	\end{figure}
	
	\subsection{Sudoku (original and permuted)}
	\label{sec:sudoku}
	
	\begingroup
    \renewcommand*{\arraystretch}{1.2}
	\begin{table*}[t!]
	    \centering
	    \begin{subtable}[t]{0.33\textwidth}
        \centering
	    \begin{tabular}{ccc}
	    \toprule
	       \textbf{Model}  &  \textbf{Train} &  \textbf{Test} \\
	   \midrule
	       ConvNet & 72.6\% & 0.04\% \\
	       ConvNetMask & 91.4\% & 15.1\% \\
	       \textbf{SATNet (ours)}  & 99.8\% & \textbf{98.3\%}    \\
	   \bottomrule
	    \end{tabular}
	    \caption{Original Sudoku.}
	    \end{subtable}%
	  	\begin{subtable}[t]{0.33\textwidth}
        \centering
	    \begin{tabular}{ccc}
	    \toprule
	       \textbf{Model}  &  \textbf{Train} &  \textbf{Test} \\
	   \midrule
	       ConvNet & 0\% & 0\% \\
	       ConvNetMask & 0.01\% & 0\% \\
	    \textbf{SATNet (ours)}  & 99.7\% &  \textbf{98.3\%}   \\
	   \bottomrule
	    \end{tabular}
	    \caption{Permuted Sudoku.}
	    \end{subtable}
	    \begin{subtable}[t]{0.33\textwidth}
        \centering
	    \begin{tabular}{ccc}
	    \toprule
	       \textbf{Model}  &  \textbf{Train} &  \textbf{Test} \\
	   \midrule
	       ConvNet & 0.31\% & 0\% \\
	       ConvNetMask & 89\% & 0.1\% \\
	    \textbf{SATNet (ours)}  & 93.6\% &  \textbf{63.2\%} \\
	   \bottomrule
	    \end{tabular}
	    \caption{Visual Sudoku. (Note: the theoretical ``best'' test accuracy for our architecture is 74.7\%.)}
	    \end{subtable}
	    \caption{Results for $9\times 9$ Sudoku experiments with 9K train/1K test examples. We compare our SATNet model against a vanilla convolutional neural network (ConvNet) as well as one that receives a binary mask indicating which bits need to be learned (ConvNetMask).
	   }
	    \label{tab:sudoku-9}
	\end{table*}
	\endgroup
	
	In this experiment, we test SATNet's ability to infer and recover constraints simply from bit supervision (i.e. without any hard-coded specification of how bits are related).
	We demonstrate this property via 
	Sudoku.
	In Sudoku, given a (typically) $9 \times 9$ partially-filled grid of numbers, a player must fill in the remaining empty grid cells such that each row, each column, and each of nine $3 \times 3$ subgrids contains exactly one of each number from 1 through 9.
	While this constraint satisfaction problem is computationally easy to solve once the rules of the game are specified, actually \emph{learning the rules of the game}, i.e. the hard constraints of the puzzle, has proved challenging for traditional neural network architectures.
	In particular, Sudoku problems are often solved computationally via tree search, and while tree search cannot be easily performed by neural networks, it is easily expressible 
	using
	SAT and MAXSAT problems.
	
	We construct a SATNet model for this task that takes as input a logical (bit) representation of the initial Sudoku board along with a mask representing which bits must be learned (i.e.~all bits in empty Sudoku cells). 
	This input is vectorized, which means that our SATNet model cannot exploit the locality structure of the input Sudoku grid when learning to solve puzzles.
	Given this input, the SATNet layer then outputs a bit representation of the Sudoku board with guesses for the unknown bits.
	Our model architecture consists of a single SATNet layer with 300 auxiliary variables and low rank structure $m =600$, and we train it to minimize a digit-wise negative log likelihood objective (optimized via Adam with a $2\times 10^{-3}$ learning rate).
	
	We compare our model to a convolutional neural network baseline modeled on that of \citet{park2016can}, which interprets the bit inputs as 9 input image channels (one for each square in the board) and uses a sequence of 10 convolutional layers (each with 512 3$\times$3 filters) to output the solution. The ConvNet makes explicit use of locality in the input representation since it treats the nine cells within each square as a single image. We also compare to a version of the ConvNet which receives a binary mask indicating which bits need to be learned (ConvNetMask). The mask is input as a set of additional image channels in the same format as the board. We trained both architectures using mean squared error (MSE) loss (which gave better results than negative log likelihood for this architecture). 
	The loss was optimized using Adam (learning rate $10^{-4}$).  
	We additionally tried to train an OptNet \cite{amos2017optnet} model for comparison, but this model made little progress even after a few days of training.
	(We compare our method to OptNet on a simpler $4 \times 4$ version of the Sudoku problem in Appendix~\ref{appsec:sudoku-4}.)
	
	Our results for the traditional $9 \times 9$ Sudoku problem (over 9K training examples and 1K test examples) are shown in Table~\ref{tab:sudoku-9}. (Convergence plots for this experiment are shown in Appendix~\ref{appsec:sudoku-9}.)
	Our model is able to 
	learn the constraints of the Sudoku problem, achieving high accuracy early in the training process (%
	95.0\% test accuracy in 22 epochs/37 minutes on a GTX 1080 Ti GPU), and demonstrating 98.3\% board-wise test accuracy after 100 training epochs (172 minutes).
	On the other hand, the ConvNet baseline does poorly. It learns to correctly solve 72.6\% of puzzles in the training set but fails altogether to generalize: accuracy on the held-out set reaches at most 0.04\%. The ConvNetMask baseline, which receives a binary mask denoting which entries must be completed, performs only somewhat better, correctly solving 15.1\% of puzzles in the held-out set. 
	We note that our test accuracy is qualitatively similar to the results obtained in \citet{palm2017recurrent}, but that our network is able to learn the structure of Sudoku \emph{without explicitly encoding the relationships between variables}.
	
	To underscore that our architecture truly learns the rules of the game, as opposed to overfitting to locality or other structure in the inputs, we test our SATNet architecture on \emph{permuted} Sudoku boards, i.e. boards for which we apply a fixed permutation of the underlying bit representation (and adjust the corresponding input masks and labels accordingly).
	This removes any locality structure, and the resulting Sudoku boards do not have clear visual analogues that can be solved by humans. However, the relationships between bits are unchanged (modulo the permutation) and should therefore be discoverable by architectures that can truly learn the underlying logical structure.
	Table~\ref{tab:sudoku-9} shows results for this problem in  comparison to the convolutional neural network baselines.
	Our architecture is again able to 
	learn the rules of the (permuted) game, demonstrating the same 98.3\% board-wise test accuracy as in the original game.
	In contrast, the convolutional neural network baselines perform even more poorly than in the original game (achieving 0\% test accuracy even with the binary mask as input), as there is little locality structure to exploit.
	Overall, these results demonstrate that SATNet can truly learn the logical relationships between discrete variables.
	
	\subsection{Visual Sudoku}
	
	\begin{figure}[t]
		\centering
		\includegraphics[width=0.3\textwidth]{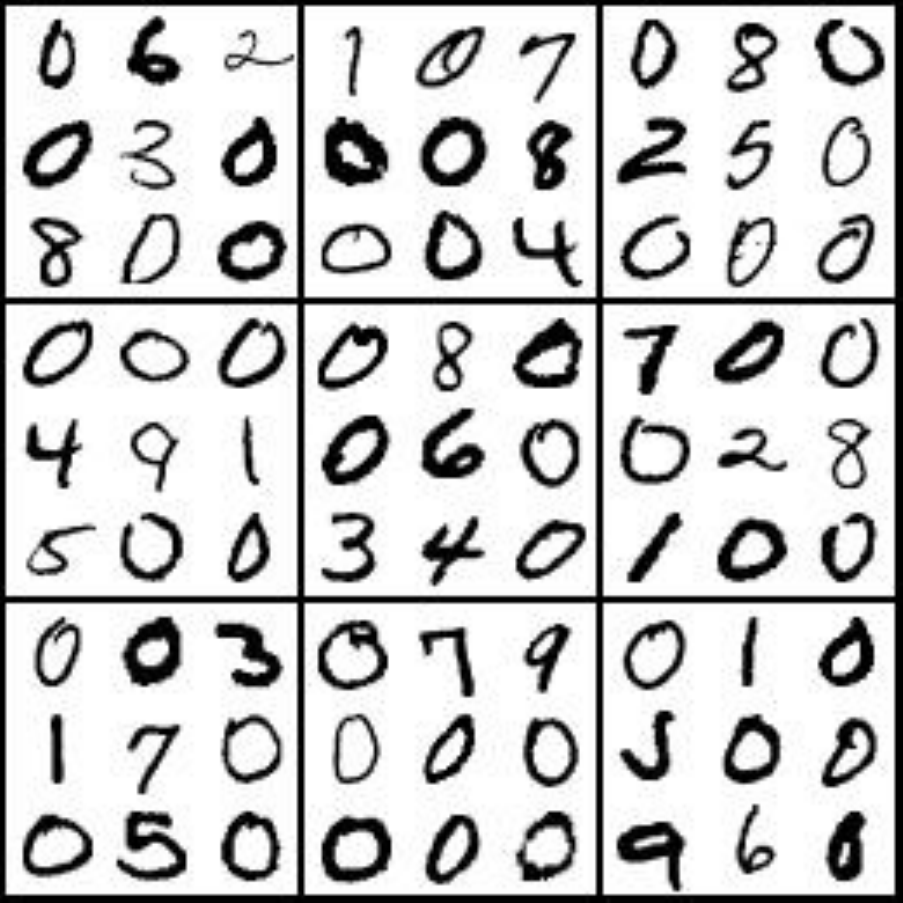}
		\caption{An example visual Sudoku image input, i.e. an image of a Sudoku board constructed with MNIST digits. Cells filled with the numbers 1-9 are fixed, and zeros represent unknowns.}
		\label{fig:mnist-sudoku-input}
	\end{figure}
	
	In this experiment, we demonstrate that SATNet can be integrated into larger deep network architectures for end-to-end training. 
	Specifically, we solve the visual Sudoku problem: that is, given an \emph{image representation} of a Sudoku board (as opposed to a one-hot encoding or other logical representation) constructed with MNIST digits, our network must output a \emph{logical solution} to the associated Sudoku problem. 
	An example input is shown in Figure~\ref{fig:mnist-sudoku-input}.
	This problem cannot traditionally be represented well by neural network architectures, as it requires the ability to combine multiple neural network layers \emph{without} hard-coding 
	the logical structure
	of the problem into intermediate logical layers.
	
	Our architecture for this problem uses a convolutional neural network connected to a SATNet layer. 
	Specifically, we apply a convolutional layer for digit classification (which uses the LeNet architecture \cite{lecun1998gradient}) to each cell of the Sudoku input.
	Each cell-wise probabilistic output of this convolutional layer is then fed as logical input to the SATNet layer, along with an input mask (as in Section~\ref{sec:sudoku}).
	This SATNet layer employs the same architecture and training parameters as described in the previous section.
	The whole model is trained end-to-end to minimize cross-entropy loss, and is optimized via Adam with learning rates $2\times 10^{-3}$ for the SATNet layer and $10^{-5}$ for the convolutional layer. 
	
	We compare our approach against a convolutional neural network which combines two sets of convolutional layers. First, the visual inputs are passed through the same convolutional layer as in our SATNet model, which outputs a probabilistic bit representation. Next, this representation is passed through the convolutional architecture that we compared to for the original Sudoku problem, which outputs a solution. We use the same training approach as above. 
	
	Table~\ref{tab:sudoku-9} summarizes our experimental results (over 9K training examples and 1K test examples); additional plots are shown in Appendix~\ref{appsec:sudoku-9}.
	We contextualize these results against the theoretical ``best'' testing accuracy of 74.7\%, which accounts for the Sudoku digit classification accuracy of our specific convolutional architecture;
	that is, assuming boards with 36.2 out of 81 filled cells on average (as in our test set) and 
    an
	MNIST model with 99.2\% test accuracy \cite{lecun1998gradient}, we would expect a perfect Sudoku solver to output the correct solution 74.7\% ($= 0.992^{36.2}$) of the time.
	In 100 epochs, our model learns to correctly solve 63.2\% of boards at test time, reaching 85\% of this theoretical ``best.''
	Hence, our approach demonstrates strong performance in solving visual Sudoku boards end-to-end.
	On the other hand, the baseline convolutional networks make only minuscule improvements to the training loss over the course of 100 epochs, and fail altogether to improve out-of-sample performance.
	Accordingly, our SATNet architecture enables end-to-end learning of the ``rules of the game'' directly from pictorial inputs in a way that was not possible with previous architectures.

	%
	%
	
	\section{Conclusion}
	In this paper, we have presented a low-rank differentiable MAXSAT layer that can be integrated into neural network architectures.
	This layer employs block coordinate descent methods to efficiently compute the forward and backward passes, and is amenable to GPU acceleration.
	We show that our SATNet architecture can be successfully used to learn logical structures, namely the parity function and the rules of $9 \times 9$ Sudoku.
	We also show, via a visual Sudoku task, that our layer can be integrated into larger deep network architectures for end-to-end training.
	Our layer thus shows promise in allowing deep networks to learn logical structure \emph{without hard-coding of the relationships between variables}.
	
	More broadly, we believe that this work fills a notable gap in the regime spanning deep learning and logical reasoning.  
	While many ``differentiable logical reasoning'' systems have been proposed, most of them still require 
	fairly hand-specified logical rules and groundings, 
	and thus are somewhat limited in their ability to operate in a truly end-to-end fashion.  
	Our hope is that by wrapping a powerful yet generic primitive such as MAXSAT solving within a differentiable framework, 
	our solver can enable ``implicit'' logical reasoning to occur where needed within larger frameworks, even if the precise structure of the domain is unknown and must be learned from data.  
	In other words, we believe that SATNet provides a step towards integrating symbolic reasoning and deep learning, a long-standing goal in artificial intelligence. 
	
	\section*{Acknowledgments}
	Po-Wei Wang is supported by a grant from the Bosch Center for AI; Priya Donti is supported by the Department of Energy's Computational Science Graduate Fellowship under grant number DE-FG02-97ER25308; and Bryan Wilder is supported by the National Science Foundation Graduate Research Fellowship.
	
	\bibliography{sdp}

\begin{thebibliography}{27}
\providecommand{\natexlab}[1]{#1}
\providecommand{\url}[1]{\texttt{#1}}
\expandafter\ifx\csname urlstyle\endcsname\relax
  \providecommand{\doi}[1]{doi: #1}\else
  \providecommand{\doi}{doi: \begingroup \urlstyle{rm}\Url}\fi

\bibitem[Amos \& Kolter(2017)Amos and Kolter]{amos2017optnet}
Amos, B. and Kolter, J.~Z.
\newblock Optnet: Differentiable optimization as a layer in neural networks.
\newblock \emph{arXiv preprint arXiv:1703.00443}, 2017.

\bibitem[Barvinok(1995)]{barvinok1995problems}
Barvinok, A.~I.
\newblock Problems of distance geometry and convex properties of quadratic
  maps.
\newblock \emph{Discrete \& Computational Geometry}, 13\penalty0 (2):\penalty0
  189--202, 1995.

\bibitem[Cingillioglu \& Russo(2018)Cingillioglu and
  Russo]{cingillioglu2018deeplogic}
Cingillioglu, N. and Russo, A.
\newblock Deeplogic: End-to-end logical reasoning.
\newblock \emph{arXiv preprint arXiv:1805.07433}, 2018.

\bibitem[Dai et~al.(2018)Dai, Xu, Yu, and Zhou]{dai2018tunneling}
Dai, W.-Z., Xu, Q.-L., Yu, Y., and Zhou, Z.-H.
\newblock Tunneling neural perception and logic reasoning through abductive
  learning.
\newblock \emph{arXiv preprint arXiv:1802.01173}, 2018.

\bibitem[Djolonga \& Krause(2017)Djolonga and
  Krause]{djolonga2017differentiable}
Djolonga, J. and Krause, A.
\newblock Differentiable learning of submodular models.
\newblock In \emph{Advances in Neural Information Processing Systems}, pp.\
  1013--1023, 2017.

\bibitem[Donti et~al.(2017)Donti, Amos, and Kolter]{donti2017task}
Donti, P.~L., Amos, B., and Kolter, J.~Z.
\newblock Task-based end-to-end model learning in stochastic optimization.
\newblock \emph{arXiv preprint arXiv:1703.04529}, 2017.

\bibitem[Evans \& Grefenstette(2018)Evans and Grefenstette]{evans2018learning}
Evans, R. and Grefenstette, E.
\newblock Learning explanatory rules from noisy data.
\newblock \emph{Journal of Artificial Intelligence Research}, 61:\penalty0
  1--64, 2018.

\bibitem[Garcez et~al.(2015)Garcez, Besold, De~Raedt, F{\"o}ldiak, Hitzler,
  Icard, K{\"u}hnberger, Lamb, Miikkulainen, and Silver]{garcez2015neural}
Garcez, A., Besold, T.~R., De~Raedt, L., F{\"o}ldiak, P., Hitzler, P., Icard,
  T., K{\"u}hnberger, K.-U., Lamb, L.~C., Miikkulainen, R., and Silver, D.~L.
\newblock Neural-symbolic learning and reasoning: contributions and challenges.
\newblock In \emph{Proceedings of the AAAI Spring Symposium on Knowledge
  Representation and Reasoning: Integrating Symbolic and Neural Approaches,
  Stanford}, 2015.

\bibitem[Goemans \& Williamson(1995)Goemans and
  Williamson]{goemans1995improved}
Goemans, M.~X. and Williamson, D.~P.
\newblock Improved approximation algorithms for maximum cut and satisfiability
  problems using semidefinite programming.
\newblock \emph{Journal of the ACM (JACM)}, 42\penalty0 (6):\penalty0
  1115--1145, 1995.

\bibitem[Gomes et~al.(2006)Gomes, van Hoeve, and Leahu]{gomes2006power}
Gomes, C.~P., van Hoeve, W.-J., and Leahu, L.
\newblock The power of semidefinite programming relaxations for max-sat.
\newblock In \emph{International Conference on Integration of Artificial
  Intelligence (AI) and Operations Research (OR) Techniques in Constraint
  Programming}, pp.\  104--118. Springer, 2006.

\bibitem[Hu et~al.(2016)Hu, Ma, Liu, Hovy, and Xing]{hu2016harnessing}
Hu, Z., Ma, X., Liu, Z., Hovy, E., and Xing, E.
\newblock Harnessing deep neural networks with logic rules.
\newblock In \emph{Proceedings of the 54th Annual Meeting of the Association
  for Computational Linguistics (Volume 1: Long Papers)}, volume~1, pp.\
  2410--2420, 2016.

\bibitem[Kingma \& Ba(2015)Kingma and Ba]{kingma2015adam}
Kingma, D.~P. and Ba, J.~L.
\newblock Adam: Amethod for stochastic optimization.
\newblock In \emph{International Conference on Learning Representations}, 2015.

\bibitem[LeCun et~al.(1998)LeCun, Bottou, Bengio, and
  Haffner]{lecun1998gradient}
LeCun, Y., Bottou, L., Bengio, Y., and Haffner, P.
\newblock Gradient-based learning applied to document recognition.
\newblock \emph{Proceedings of the IEEE}, 86\penalty0 (11):\penalty0
  2278--2324, 1998.

\bibitem[Ling et~al.(2018)Ling, Fang, and Kolter]{ling2018what}
Ling, C.~K., Fang, F., and Kolter, J.~Z.
\newblock What game are we playing? end-to-end learning in normal and extensive
  form games.
\newblock In \emph{Proceedings of the Twenty-Seventh International Joint
  Conference on Artificial Intelligence}, pp.\  396--402, 2018.
\newblock \doi{10.24963/ijcai.2018/55}.

\bibitem[Manhaeve et~al.(2018)Manhaeve, Dumancic, Kimmig, Demeester, and
  De~Raedt]{manhaeve2018deepproblog}
Manhaeve, R., Dumancic, S., Kimmig, A., Demeester, T., and De~Raedt, L.
\newblock Deepproblog: Neural probabilistic logic programming.
\newblock In \emph{Advances in Neural Information Processing Systems}, pp.\
  3749--3759, 2018.

\bibitem[Palm et~al.(2017)Palm, Paquet, and Winther]{palm2017recurrent}
Palm, R.~B., Paquet, U., and Winther, O.
\newblock Recurrent relational networks.
\newblock \emph{arXiv preprint arXiv:1711.08028}, 2017.

\bibitem[Park(2016)]{park2016can}
Park, K.
\newblock Can neural networks crack sudoku?, 2016.
\newblock URL \url{https://github.com/Kyubyong/sudoku}.

\bibitem[Pataki(1998)]{pataki1998rank}
Pataki, G.
\newblock On the rank of extreme matrices in semidefinite programs and the
  multiplicity of optimal eigenvalues.
\newblock \emph{Mathematics of operations research}, 23\penalty0 (2):\penalty0
  339--358, 1998.

\bibitem[Selsam et~al.(2018)Selsam, Lamm, Bunz, Liang, de~Moura, and
  Dill]{selsam2018learning}
Selsam, D., Lamm, M., Bunz, B., Liang, P., de~Moura, L., and Dill, D.~L.
\newblock Learning a sat solver from single-bit supervision.
\newblock \emph{arXiv preprint arXiv:1802.03685}, 2018.

\bibitem[Shalev{-}Shwartz et~al.(2017)Shalev{-}Shwartz, Shamir, and
  Shammah]{shalev2017failures}
Shalev{-}Shwartz, S., Shamir, O., and Shammah, S.
\newblock Failures of gradient-based deep learning.
\newblock In \emph{Proceedings of the 34th International Conference on Machine
  Learning}, pp.\  3067--3075, 2017.

\bibitem[Sourek et~al.(2018)Sourek, Aschenbrenner, Zelezny, Schockaert, and
  Kuzelka]{sourek2018lifted}
Sourek, G., Aschenbrenner, V., Zelezny, F., Schockaert, S., and Kuzelka, O.
\newblock Lifted relational neural networks: Efficient learning of latent
  relational structures.
\newblock \emph{Journal of Artificial Intelligence Research}, 62:\penalty0
  69--100, 2018.

\bibitem[Tschiatschek et~al.(2018)Tschiatschek, Sahin, and
  Krause]{tschiatschek2018differentiable}
Tschiatschek, S., Sahin, A., and Krause, A.
\newblock Differentiable submodular maximization.
\newblock \emph{arXiv preprint arXiv:1803.01785}, 2018.

\bibitem[Wang \& Kolter(2019)Wang and Kolter]{wang2018low}
Wang, P.-W. and Kolter, J.~Z.
\newblock Low-rank semidefinite programming for the max2sat problem.
\newblock In \emph{AAAI Conference on Artificial Intelligence}, 2019.

\bibitem[Wang et~al.(2017)Wang, Chang, and Kolter]{wang2017mixing}
Wang, P.-W., Chang, W.-C., and Kolter, J.~Z.
\newblock The mixing method: coordinate descent for low-rank semidefinite
  programming.
\newblock \emph{arXiv preprint arXiv:1706.00476}, 2017.

\bibitem[Wilder et~al.(2018)Wilder, Dilkina, and Tambe]{wilder2018melding}
Wilder, B., Dilkina, B., and Tambe, M.
\newblock Melding the data-decisions pipeline: Decision-focused learning for
  combinatorial optimization.
\newblock In \emph{AAAI Conference on Artificial Intelligence}, 2018.

\bibitem[Xu et~al.(2018)Xu, Zhang, Friedman, Liang, and den
  Broeck]{xu2018semantic}
Xu, J., Zhang, Z., Friedman, T., Liang, Y., and den Broeck, G.~V.
\newblock A semantic loss function for deep learning with symbolic knowledge.
\newblock In \emph{Proceedings of the 35th International Conference on Machine
  Learning}, pp.\  5498--5507, 2018.
\newblock URL \url{http://proceedings.mlr.press/v80/xu18h.html}.

\bibitem[Yang et~al.(2017)Yang, Yang, and Cohen]{yang2017differentiable}
Yang, F., Yang, Z., and Cohen, W.~W.
\newblock Differentiable learning of logical rules for knowledge base
  reasoning.
\newblock In \emph{Advances in Neural Information Processing Systems}, pp.\
  2319--2328, 2017.

\end{thebibliography}
	\bibliographystyle{icml2019}
	
	\clearpage
	\newpage
	\appendix
	\numberwithin{equation}{section}
	\numberwithin{figure}{section}
	\numberwithin{theorem}{section}
	
	\section{Derivation of the forward pass coordinate descent update}
	\label{appsec:coord-desc-fwd}
	
	Our MAXSAT SDP relaxation (described in Section~\ref{sec:sdp}) is given by
	\begin{equation}
		\label{appeq:sdp}
		\begin{aligned}
			\minimize_{V \in \mathbb{R}^{k \times (n+1)}} \;\;&\mdot{S^T S}{V^T V},\\
			\subjectto\;\; &\norm{v_i}=1,\;\;i=0,\ldots,n,
		\end{aligned}
	\end{equation}
	where $S \in \mathbb{R}^{m \times (n+1)}$ and $v_i$ is the $i$th column vector of $V$.
	
	We rewrite the objective of~\eqref{appeq:sdp} as $\mdot{S^T S}{V^T V} \equiv \tr((S^T S)^T (V^T V)) = \tr(V^T V S^T S)$ by noting that $S^T S$ is symmetric and by cycling matrices within the trace. We then observe that the objective terms that depend on any given $v_i$ are given by 
	\begin{equation}
		\label{appeq:coord-desc-obj}
		v_i^T \sum_{j=0}^n s_j^T s_i v_j = v_i^T \sum_{\substack{j=0\\(j \neq i)}}^n s_j^T s_i v_j + v_i^T s_i^T s_i v_i,
	\end{equation}
	where $s_i$ is the $i$th column vector of $S$. Observe $v_i^Tv_i$ in the last term cancels to $1$, and the remaining coefficient
	\begin{equation}
		g_i \equiv \sum_{\substack{j=0\\(j \neq i)}}^n s_j^T s_i v_j = V S^T s_i - \norm{s_i}^2 v_i
	\end{equation} 
	is  constant with respect to $v_i$.
	Thus, \eqref{appeq:coord-desc-obj} can be simply rewritten as 
	\begin{equation}
		v_i^T g_i + s_i^Ts_i.
	\end{equation}
	
	Minimizing this expression over $v_i$ with respect to the constraint  $\|v_i\| = 1$ yields the block coordinate descent update
	\begin{equation}
		\label{appeq:coord-desc}
		v_i = -g_i/\norm{g_i}.
	\end{equation}
	
	\section{Details on backpropagation through the MAXSAT SDP}
	\label{appsec:sdp-backprop}
	
	Given the result $\nicefrac{\partial \ell}{\partial V_{\outset}}$, we next seek to compute $\nicefrac{\partial \ell}{\partial V_{\inset}}$ and $\nicefrac{\partial \ell}{\partial S}$ by pushing gradients through the SDP solution procedure described in Section~\ref{sec:sdp}.
	We do this by taking the total differential through our coordinate descent updates~\eqref{eq:coord-desc-forward} for each output $\outdex \in \outset$ at the optimal fixed-point solution to which these updates converge.
	
	\paragraph{Computing the total differential.} Computing the total differential of the updates~\eqref{eq:coord-desc-forward} and rearranging, we see that for every $\outdex \in \outset$,
	\begin{small}
		\begin{align}
			&\left(\|g_{\outdex}\| I_k -\norm{s_{\outdex}}^2 P_{\outdex} \right)\dd v_{\outdex} + P_{\outdex} \sum_{j\in \outset} s_{\outdex}^T s_j \dd v_j = -P_{\outdex}\xi_{\outdex}, \label{appeq:tot-diff-one}\\
			\intertext{\begin{normalsize}where\end{normalsize}}
			&\xi_{\outdex} \equiv \Big( \sum_{j \in \inset'} s_{\outdex}^Ts_j \dd v_j + V \dd S^Ts_{\outdex} + VS^T \dd s_{\outdex} -2\dd s_{\outdex}^T s_{\outdex} v_{\outdex} \Big), \label{eq:xi}
		\end{align}
	\end{small}
	and where $P_{\outdex} \equiv I_k - v_{\outdex} v_{\outdex}^T, \outdex \in \outset$ and $\inset' \equiv \{ \truthvar \}\, \cup\, \inset$.
	
	\paragraph{Rewriting as a linear system.} Rewriting Equation~\ref{appeq:tot-diff-one} over all $\outdex \in \outset$ as a linear system, we obtain
	\begin{equation}
		\label{appeq:tot-diff}
		\begin{aligned}
			&\Big(\diag(\|g_{\outdex}\|)\otimes I_k + P C \otimes I_k \Big)\vect(\dd V_{\outset}) = -P \vect(\xi_{\outdex}) \\
			&\Rightarrow \vect(\dd V_{\outset}) = -\Big(P (\big(\diag(\|g_{\outdex}\|) + C \big) \otimes I_k )P\Big)^{\dagger} \vect({\xi_{\outdex})},
		\end{aligned}
	\end{equation}
	where $C = S_{\outset}^T S_{\outset} - \diag(\norm{s_{\outdex}}^2)$, $P = \diag(P_{\outdex})$, and the second step follows from the lemma presented in Appendix~\ref{appsec:pseudoinverse-rule}.
	
	We then see that by the chain rule, the gradients $\nicefrac{\partial \ell}{\partial V_{\inset}}$ and $\nicefrac{\partial \ell}{\partial S}$ are given by the left matrix-vector product
	\begin{equation}
		\label{appeq:long}
		\begin{small}
			\begin{aligned}
				&\left( \frac{\partial \ell}{\partial \vect(V_{\outset})}  \right)^T \vect(\dd V_{\outset}) \\
				={ }&
				- \left(\frac{\partial\ell}{\partial \vect(V_{\outset})}\right)^T \Big(P(\big(\diag(\norm{g_{\outdex}})+C\big) \otimes I_k)P\Big)^\dagger \vect(\xi_{\outdex})\\
			\end{aligned}
		\end{small}
	\end{equation}
	where the second equality comes from plugging in the result of~\eqref{appeq:tot-diff}.
	
	Now, define $U \in \mathbb{R}^{k \times n}$, where the columns $U_{\inset} = 0$ and the columns $U_{\outset}$ are given by
	\begin{equation}\label{appeq:U}
		\vect(U_{\outset})=\Big(P(\big(\diag(\norm{g_{\outdex}})+C\big) \otimes I_k)P\Big)^\dagger\vect\left(\frac{\partial\ell}{\partial \vect(V_{\outset})}\right).
	\end{equation}
	Then, we see that \eqref{appeq:long} can be written as
	\begin{equation}
		\left( \frac{\partial \ell}{\partial \vect(V_{\outset})}  \right)^T \vect(\dd V_{\outset}) = -\vect(U_{\outset})^T\vect(\xi_{\outdex}),
		\label{eq:app-hat-U}
	\end{equation}
	which is the implicit linear form for our gradients.
	
	\paragraph{Computing desired gradients from implicit linear form.} Once we have obtained $U_{\outset}$ (via coordinate descent), we can explicitly compute the desired gradients $\nicefrac{\partial \ell}{\partial V_{\inset}}$ and $\nicefrac{\partial \ell}{\partial S}$ from the implicit form~\eqref{eq:app-hat-U}.
	For instance, to compute the gradient $\nicefrac{\partial \ell}{\partial v_{\inndex}}$ for some $\inndex \in \inset$, we would set $\dd v_{\inndex} = 1$ and all other gradients to zero in Equation~\eqref{eq:app-hat-U} (where these gradients are captured within the terms $\xi_{\outdex}$).
	
	Explicitly, we compute each $\nicefrac{\partial \ell}{\partial v_{\inndex j}}$ by setting $\dd v_{\inndex j} = 1$ and all other gradients to zero, i.e.
	\begin{equation}
		\begin{aligned}
			\frac{\partial \ell}{\partial v_{\inndex j}} &= -\vect(U_{\outset})^T \vect(\xi_{\outdex}) 
			= -\sum_{\outdex \in \outset} u_{\outdex}^T e_{j} s_{\inndex}^T s_{\outdex} \\
			&= -e_{j}^T \left(\sum_{\outdex \in \outset} u_{\outdex} s_{\outdex}^T   \right) s_{\inndex}.
		\end{aligned}
	\end{equation}
	Similarly, we compute each $\nicefrac{\partial \ell}{\partial S_{i,j}}$ by setting $\dd S_{i,j} = 1$ and all other gradients to zero, i.e.
	\begin{equation}
		\begin{aligned}
			\frac{\partial \ell}{\partial S_{i,j}} &= -\sum_{\outdex \in \outset} u_{\outdex}^T \xi_{\outdex} \\
			&= -\sum_{\outdex \in \outset} u_{\outdex}^T v_i s_{\outdex j} - u_i^T (VS^T)_j + u_i^T (s_{ij} P_i v_i)\\
			&=-v_i^T(\sum_{\outdex \in \outset} u_{\outdex} s_{\outdex j}) - u_i^T (VS^T)_{j}.
		\end{aligned}
	\end{equation}
	In matrix form, these gradients are
	\begin{align}
		&\frac{\partial \ell}{\partial V_{\inset}} = -\left(\sum_{\outdex \in \outset} u_{\outdex} s_{\outdex}^T \right)S_{\inset}, \label{appeq:dl-dvin}\\
		&\frac{\partial \ell}{\partial S} = -\left(\sum_{\outdex \in \outset} u_{\outdex} s_{\outdex}^T \right)^TV - (S V^T)U, \label{appeq:dl-ds}
	\end{align}
	where $u_i$ is the $i$th column of $U$, and where $S_{\inset}$ denotes the $\inset$-indexed column subset of $S$. 

	\section{Proof of pseudoinverse computations}
	\label{appsec:pseudoinverse-rule}
	We prove the following lemma, used to derive the implicit total differential for $\vect(\dd V_{\outset})$.
	
	\begin{lemma}\label{lemma:inv} 
	The quantity
		\begin{equation}
		 \vect(\dd V_{\outset}) = \left(P\left( (D+C) \otimes I_k \right) P\right)^{\dagger}\vect(\xi_{\outdex})   
		\end{equation}
	is the solution of the linear system
		\begin{equation}
			(D  \otimes I_k+PC\otimes I_k)\vect(\dd V_{\outset})=P\vect(\xi_{\outdex}),
		\end{equation}
		where $P=\diag(I_k-v_{\outdex} v_{\outdex}^T)$, $C = S_{\outset}^T S_{\outset} - \diag(\norm{s_{\outdex}}^2)$, $D = \diag(\|g_i\|)$, and $\xi_{\outdex}$ is as defined in Equation~\eqref{eq:xi}.
	\end{lemma}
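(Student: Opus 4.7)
The plan is to reduce the given linear system to a symmetric one via a projection trick, and then invoke the defining property of the Moore--Penrose pseudoinverse.

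First, I would observe that the differential $y \equiv \vect(\dd V_\outset)$ automatically lies in the range of $P$. Indeed, differentiating the unit-norm constraint $\|v_\outdex\| = 1$ gives $v_\outdex^T \dd v_\outdex = 0$, which is exactly $P_\outdex \dd v_\outdex = \dd v_\outdex$ for every $\outdex \in \outset$; assembled across outputs this reads $Py = y$. Next, I would note that $D \otimes I_k$ and $P$ are both block-diagonal with the same $k \times k$ block partitioning, and that each block of $D \otimes I_k$ is a scalar multiple of $I_k$. Hence $D \otimes I_k$ commutes with $P$, and in particular
\begin{equation*}
(D \otimes I_k)\, y \;=\; (D \otimes I_k)\, P y \;=\; P (D \otimes I_k)\, y \;=\; P (D \otimes I_k)\, P y.
\end{equation*}

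Using this identity, the left-hand side of the linear system in the lemma can be rewritten as
\begin{equation*}
(D \otimes I_k + P(C \otimes I_k))\, y \;=\; P((D+C) \otimes I_k)\, P y \;=\; M y,
\end{equation*}
where $M \equiv P((D+C) \otimes I_k) P$. So the system reduces to $M y = P \vect(\xi_\outdex)$. Note that $M$ is symmetric, because $C$ is symmetric (as $S_\outset^T S_\outset - \diag(\|s_\outdex\|^2)$), $D$ is diagonal, and $P$ is symmetric.

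Finally, I would apply the standard pseudoinverse fact that $M M^\dagger b = b$ whenever $b \in \mathrm{range}(M)$. To see that $P \vect(\xi_\outdex) \in \mathrm{range}(M)$, I would use the existence of the true differential: since the coordinate-descent updates \eqref{eq:coord-desc-forward} admit a differentiable solution at the fixed point, the original linear system has at least one solution $y^\star$, and by the reduction above $M y^\star = P \vect(\xi_\outdex)$, so $P \vect(\xi_\outdex) \in \mathrm{range}(M)$. Therefore $y = M^\dagger P \vect(\xi_\outdex)$ satisfies $M y = P \vect(\xi_\outdex)$, and moreover $y \in \mathrm{range}(M^\dagger) = \mathrm{range}(M) \subseteq \mathrm{range}(P)$, so $P y = y$ automatically. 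Given a non-degenerate optimum (so that $M$ restricted to $\mathrm{range}(P)$ is invertible), this solution is unique and coincides with $\vect(\dd V_\outset)$.

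The main obstacle I anticipate is the range-inclusion step: one must be careful that the pseudoinverse really returns a bona fide solution rather than a least-squares approximation. The cleanest route is the witness argument above (existence of $y^\star$), but an alternative would be to verify positive-definiteness of $(D+C)$ on $\mathrm{range}(P)$ directly from the optimality of the forward-pass coordinate descent fixed point, which would give $\mathrm{range}(M) = \mathrm{range}(P)$ outright and eliminate any dependence on the existence of the differential.
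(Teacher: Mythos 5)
Your argument follows essentially the same route as the paper's: use the block-diagonal structure of $P$ and $D\otimes I_k$ to rewrite the left-hand side as the symmetric operator $M \equiv P((D+C)\otimes I_k)P$ acting on $\vect(\dd V_{\outset})$, then invoke the pseudoinverse. Your justification that $P\vect(\dd V_{\outset})=\vect(\dd V_{\outset})$ via the norm constraint, and your witness argument for $P\vect(\xi_{\outdex})\in\mathrm{range}(M)$, are in fact more careful than the paper's treatment of these points. The one step you omit is needed to reach the formula as literally stated: you end with $M^\dagger P\vect(\xi_{\outdex})$, whereas the lemma asserts $M^\dagger\vect(\xi_{\outdex})$; these agree because $\mathrm{null}(P)\subseteq\mathrm{null}(M)=\mathrm{null}(M^\dagger)$ (using symmetry of $M$), so $M^\dagger(I-P)=0$ and the extra $P$ can be dropped --- a one-line addition that completes the proof.
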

	\begin{proof}
		Examining the equation with respect to $\dd v_i$ gives
		\begin{equation}
			\|g_i\| \dd v_i + P_i \left(\sum_j c_{ij}\dd v_j-\xi_j \right) = 0,
		\end{equation}
		which implies that for all $i$, $\dd v_i=P_iy_i$ for some $y_i$.
		Substituting $y_i$ into the equality gives
		\begin{align}
			&(D \otimes I_k + PC \otimes I_k)P\vect(y_i)\\
			=&P((D +C)\otimes I_k) P\vect(y_i) = P\vect(\xi_{\outdex}).
		\end{align}
		Note that the last equation comes form $D\otimes I_k P = D\otimes I_k PP = P (D\otimes I_k) P $ due to the block diagonal structure of the projection $P$.
		Thus, by the properties of projectors and the pseudoinverse,
		\begin{align}
			\vect(Y) &= (P((D + C)\otimes I_k) P)^\dagger P\vect(\xi_{\outdex})\\
			&=(P((D +C)\otimes I_k) P)^\dagger\vect(\xi_{\outdex}).
		\end{align}
		Note that the first equation comes from the idempotence property of $P$ (that is, $PP=P$).
		Substituting $\vect(\dd V_{\outset}) = P \vect(Y)$ back gives the solution of $\dd V_{\outset}$.
	\end{proof}
	
	\section{Derivation of the backward pass coordinate descent algorithm}
	\label{appsec:back-cd}
	Consider solving for $U_{\outset}$ as mentioned in Equation~\eqref{appeq:U}:
	\begin{equation*}
		\Big(P(\big(\diag(\norm{g_{\outdex}})+C\big) \otimes I_k)P\Big)\vect(U_{\outset})=\vect\left(\frac{\partial\ell}{\partial \vect(V_{\outset})}\right) ,\\
	\end{equation*}
	where $C = S_{\outset}^T S_{\outset} - \diag(\norm{s_{\outdex}}^2)$.
	The linear system can be computed using block coordinate descent.
	Specifically, observe this linear system with respect to only the $u_{\outdex}$ variable. Since we start from $U_{\outset}=0$, we can assume that $P\vect(U_{\outdex})=\vect(U_{\outdex})$. This yields
	\begin{equation}
		\norm{g_{\outdex}} P_\outdex u_{\outdex} +  P_{\outdex}\Big(U_{\outset} S^T_{\outset} s_{\outdex} - \norm{s_{\outdex}}^2 u_{\outdex}\Big) = P_{\outdex}\left(\frac{\partial\ell}{\partial v_{\outdex}}\right).
	\end{equation}
	Let $\Psi = (U_{\outset})S^T_{\outset}$. Then we have
	\begin{equation}
		\norm{g_{\outdex}}P_{\outdex}u_{\outdex} = - P_{\outdex}(\Psi s_{\outdex}-\norm{s_{\outdex}}^2 u_{\outdex} - \partial\ell/\partial v_{\outdex}).
	\end{equation}
	Define $-\dd g_i$ to be the terms contained in parentheses in the right-hand side of the above equation. Note that $\dd g_i$ does not depend on the variable $u_{\outdex}$. Thus, we have the closed-form feasible solution
	\begin{equation}
		u_{\outdex} = -P_{\outdex}\dd g_{\outdex} / \norm{g_{\outdex}}.
	\end{equation}
	After updating $u_{\outdex}$, we can maintain the term $\Psi$ by replacing the old $u^{\text{prev}}_{\outdex}$ with the new $u_{\outdex}$. This yields the rank 1 update
	\begin{equation}
		\Psi := \Psi + (u_{\outdex}-u^{\text{prev}}_{\outdex})s_{\outdex}^T.
	\end{equation}
	The above procedure is summarized in Algorithm~\ref{alg:backward-pass-cd}. Further, we can verify that the assumption $P\vect(U_{\outset})=\vect(U_{\outset})$ still holds after each update by the projection $P_{\outdex}$.
	
	\section{Results for the $4 \times 4$ Sudoku problem}
	\label{appsec:sudoku-4}
	
	
		
		
		

    \begin{figure*}[t!]
		\centering
		
		\includegraphics[width=\textwidth]{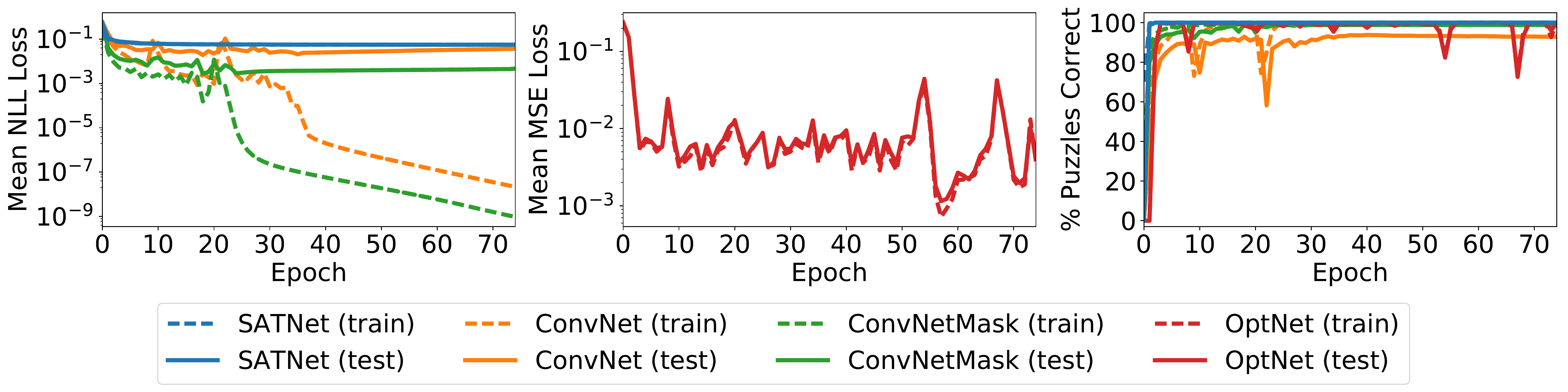}
		
		\caption{Results for $4\times 4$ Sudoku. Lower loss (mean NLL loss and mean MSE loss) and higher whole-board accuracy (\% puzzles correct) are better.}
		\label{appfig:sudoku-4}
	\end{figure*}
	
	We compare the performance of our SATNet architecture on a $4 \times 4$ reduced version of the Sudoku puzzle against OptNet \cite{amos2017optnet} and a convolutional neural network architecture.
	These results (over 9K training and 1K testing examples) are shown in Figure~\ref{appfig:sudoku-4}.
	We note that our architecture converges quickly -- in just two epochs -- to \emph{100\% board-wise test accuracy}. 
	
	OptNet takes slightly longer to converge to similar performance, in terms of both time and epochs.
	In particular, we see that OptNet takes 3-4 epochs to converge (as opposed to 1 epoch for SATNet).
	Further, in our preliminary benchmarks,  OptNet required 12 minutes to run 20 epochs on a GTX 1080 Ti GPU, whereas SATNet took only 2 minutes to run the same number of epochs.
	In other words, we see that SATNet requires fewer epochs to converge \emph{and} takes less time per epoch than OptNet.
	
	Both our SATNet architecture and OptNet outperform the traditional convolutional neural network in this setting, as the ConvNet somewhat overfits to the training set and therefore does not generalize as well to the test set (achieving 93\% accuracy). 
	The ConvNetMask, which additionally receives a binary input mask, performs much better (99\% test accuracy) but does not achieve perfect performance as in the case of OptNet and SATNet.

    \section{Convergence plots for $9 \times 9$ Sudoku experiments}
	\label{appsec:sudoku-9}
    
    Convergence plots for our $9 \times 9$ Sudoku experiments (original and permuted) are shown in Figure~\ref{fig:sudoku9}. 
    SATNet performs nearly identically in both the original and permuted settings, generalizing well to the test set at every epoch without overfitting to the training set.
    The ConvNet and ConvNetMask, on the other hand, do not generalize well.
    In the original setting, both architectures overfit to the training set, showing little-to-no improvement in generalization performance over the course of training.
    In the permuted setting, both ConvNet and ConvNetMask make little progress even on the training set, as they are not able to rely on spatial locality of inputs.
    
    Convergence plots for the visual Sudoku experiments are shown in Figure~\ref{fig:mnist-sudoku9}.
    Here, we see that SATNet generalizes well in terms of loss throughout the training process, and generalizes somewhat well in terms of whole-board accuracy. 
    The difference in generalization performance between the logical and visual Sudoku settings can be attributed to the generalization performance of the MNIST classifier trained end-to-end with our SATNet layer.
    The ConvNetMask architecture overfits to the training set, and the ConvNet architecture makes little-to-no progress even on the training set.  

\begin{figure*}[htb]
	\centering
	\begin{subfigure}{\textwidth}
	\includegraphics[width=\textwidth]{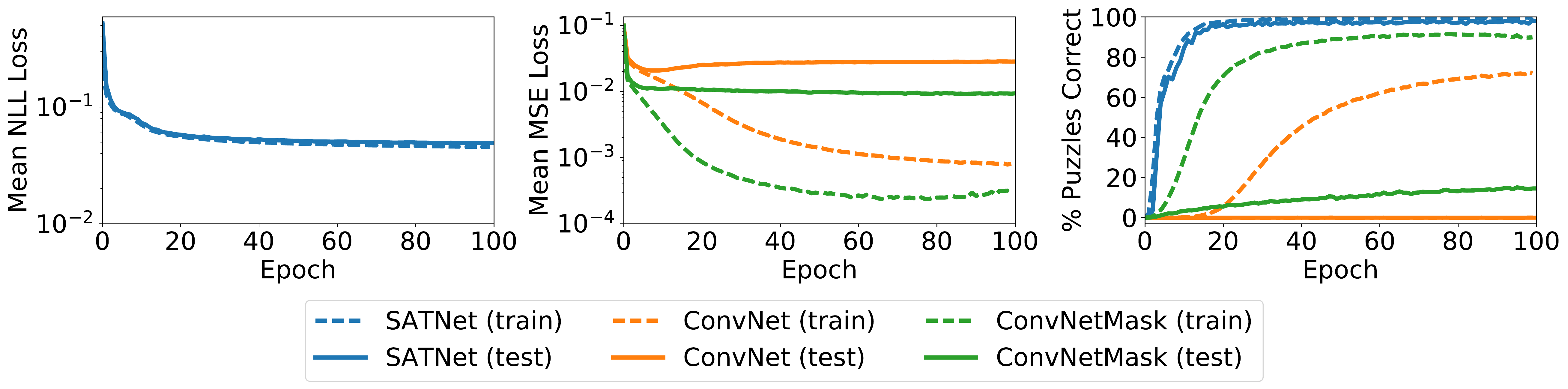}
	\caption{Original $9 \times 9$ Sudoku}
	\end{subfigure}
	\par\bigskip

    \begin{subfigure}{\textwidth}
	\includegraphics[width=\textwidth]{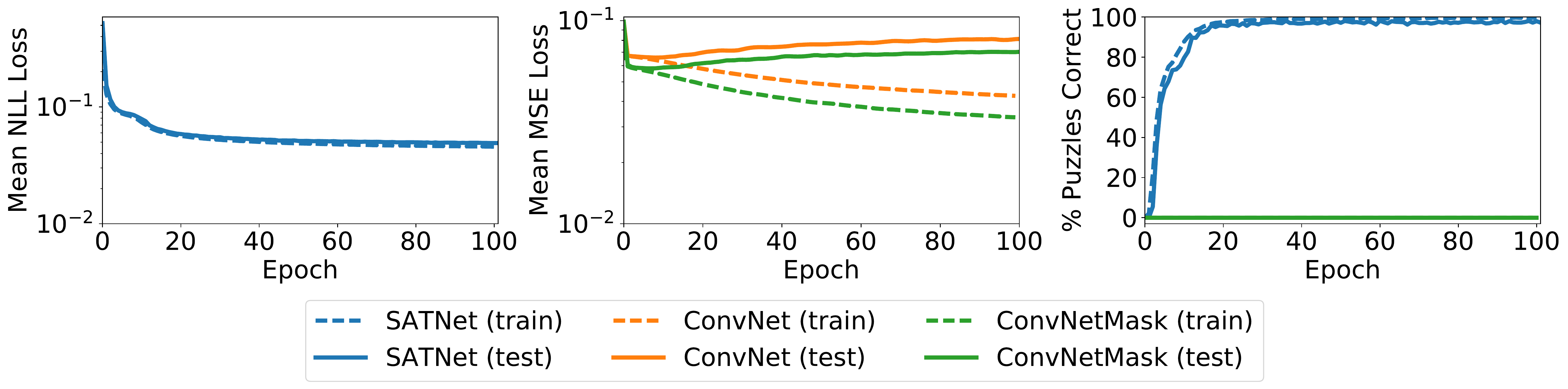}
	\caption{Permuted $9 \times 9$ Sudoku}
	\end{subfigure}
		\caption{Results for our $9 \times 9$ Sudoku experiments. 
		Lower loss (mean NLL loss and mean MSE loss) and higher whole-board accuracy (\% puzzles correct) are better.}	\label{fig:sudoku9}
	\end{figure*}
\par\bigskip
\par\bigskip
\begin{figure*}[htb]
	\centering
	\includegraphics[width=\textwidth]{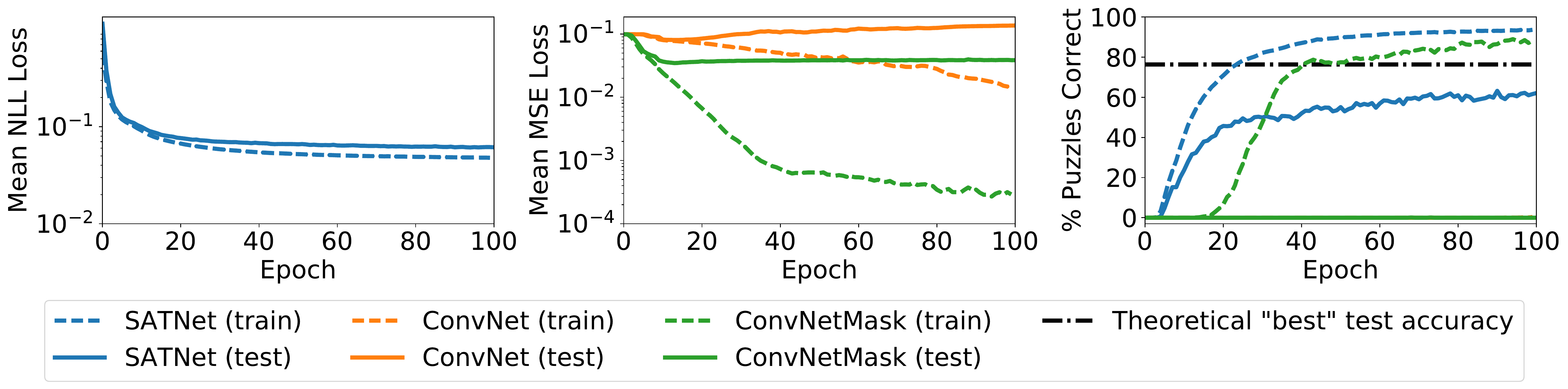}
	\caption{Results for our visual Sudoku experiments. 
		Lower loss (mean NLL loss and mean MSE loss) and higher whole-board accuracy (\% puzzles correct) are better. The theoretical ``best'' test accuracy plotted is for our specific choice of MNIST classifier architecture.}	\label{fig:mnist-sudoku9}
	\end{figure*}

\end{document}